\setlist[itemize]{leftmargin=5.5mm} % Smaller margins to the left in itemize
\title{Interpolation and Regularization for Causal Learning}
\begin{document}

\maketitle

\begin{abstract}%
% Interventional data from causal models are often unavailable, in which case learning has to rely on observational data.
We study the problem of learning causal models from observational data through the lens of interpolation and its counterpart---regularization.
A large volume of recent theoretical as well as empirical work suggests that, in highly complex model classes, interpolating estimators can have good statistical generalization properties and can even be optimal for statistical learning. Motivated by an analogy between statistical and causal learning recently highlighted by \citet{Jan:2019}, we investigate whether interpolating estimators can also learn good causal models. To this end, we consider a simple linearly confounded model and derive precise asymptotics for the \textit{causal risk} of the min-norm interpolator and ridge-regularized regressors in the high-dimensional regime. 
Under the principle of independent causal mechanisms, a standard assumption in causal learning, we find that interpolators cannot be optimal and causal learning requires stronger regularization than statistical learning. 
This resolves a recent conjecture in \citet{Jan:2019}. Beyond this assumption, we find a larger range of behavior that can be precisely characterized with a new measure of \textit{confounding strength}. If the confounding strength is negative, causal learning requires weaker regularization than statistical learning, interpolators can be optimal, and the optimal regularization can even be negative.
If the confounding strength is large, the optimal regularization is infinite and learning from observational data is actively harmful.
\end{abstract}

\begin{keywords}%
  Causality, Interpolation, Double Descent, High-dimensional linear regression. 
\end{keywords}

%%%%%%%%%%%%%%%%%%%%%%%%%%%%%%%%%%%%%%%%%%%%%%%%%%%%%%%%%%%%%%%%%%%%%%%%%%%%%%%%%%%%%%%%%%%%%%%%%%%%%
\section{Introduction}\label{sec:intro}
We consider the problem of learning the causal relationship between multivariate covariates $x \in \R^d$ and a scalar target variable $y \in \R$ purely from observational data and possibly under the presence of hidden confounders. Formally, given finite samples $\mycurls{(x_i, y_i)}_{i=1}^n$ drawn independently and identically (i.i.d) from the joint \textit{observational distribution} $p(x,y) = p(x)p(y\vert x)$, the goal of causal learning is to predict the effects on the target variable $y$ under \textit{interventions} on the covariates $x$. In other words, using Pearl's notation \citep{pearl2009causal} for $do$ interventions, the goal is to learn a predictive model that minimizes the expected loss on a random draw from the \textit{interventional distribution} $p_{do}(x,y) = p(x)p(y \vert do(x))$, which can be different from the observational distribution.

Recently, \citet{Jan:2019} established a close analogy between statistical learning and \textit{causal learning} (albeit under a highly constructed confounded model). As a consequence, \citet{Jan:2019} suggested that under certain assumptions, standard statistical learning-theoretic techniques (such as norm-based regularization) typically suggested for optimal statistical generalization may also help learn good causal models. However, the classical statistical principles of bias-variance trade-off have been challenged in the recent years by highly complex classes of models that are trained to interpolate the data and yet achieve remarkable generalization properties across a broad range of problem domains \citep{zhang2021understanding}. A large volume of recent work suggests that interpolation can be compatible with and may even be necessary to achieve optimal statistical generalization in the high-dimensional regime \citep{belkin2018understand, belkin2019does, liang2020just, feldman2020does}. Despite the surge in interest, causal properties of such interpolating estimators have not yet been explored. In this work, we consider a simple linear causal model in the high-dimensional regime ($n, d \rightarrow \infty, d/n \in \mathcal{O}(1)$) and ask: can interpolating estimators achieve good causal generalization?
\subsection{Motivation and Related Work}
\paragraph{Resemblance between statistical and causal generalization}
The problem of causal learning can be regarded as an instance of the general problem of learning under distribution shifts---where the training (observational) distribution is shifted from the test (interventional) distribution. In the framework of out-of-distribution generalization, an interesting proposition for learning good causal models arises from the following high-level idea. The bias induced due to observing small sample sizes may be similar to the bias induced due to certain distribution shifts. Therefore, techniques for learning models with good \textit{out-of-sample} generalization performance (for example, regularized risk minimization) {may} also help learn models with good \textit{out-of-distribution} generalization and vice-versa. One can find plentiful evidence in literature to support this general principle for different classes of distribution shifts. For instance, under a broad class of distribution shifts, distributionally robust optimization has been shown to be equivalent to norm-based regularization \citep{xu2009robustness, shafieezadeh2015distributionally, gao2017distributional, shafieezadeh2019regularization, blanchet2019robust, kuhn2019wasserstein}. Analogously, distributionally robust optimization techniques are also employed for statistical learning under limited samples \citep{zhu2020distributionally}. Of particular relevance to our work is the recent work of \citet{Jan:2019}, which formally establishes a close analogy between ``generalizing from \textit{empirical to observational distributions}'' and ``generalizing from \textit{observational to interventional distributions}'' under a highly constructed confounding model. As a consequence, \citet{Jan:2019} suggests that under reasonable assumptions standard norm-based regularization such as lasso or ridge typically used for statistical learning may also help learn better causal models. 
\paragraph{Interpolation can be compatible with statistical learning} Explicit norm-based regularization techniques have basis in classical learning theory principles of bias-variance trade-off, which is characterized by the classical U-shaped generalization curve. This principle recommends to avoid interpolation and instead suggests to balance data fitting with the complexity of the hypothesis class. Recently, however, these classical principles have been challenged by deep learning models. Despite being highly complex with the ability to even fit random labels and often trained to interpolate the training data, they achieve state-of-the-art out-of-sample generalization performance across a broad range of domains \citep{zhang2021understanding}. A partial explanation has been provided by the \textit{double-descent} phenomenon \citep{belkin2019reconciling, belkin2021fit}. Extending the generalization curve beyond the interpolation threshold reveals two regimes: the classical U-curve in the \textit{underparameterized} regime and a monotonically decreasing curve in the \textit{overparameterized} regime. This behaviour has been observed in deep neural networks as well as in other, simpler settings, for example, random feature models and random forests \citep{belkin2019reconciling, Has:2019, Mei:2019}. Follow-up work suggests that in the overparameterized regime, interpolating estimators can indeed achieve low statistical risk \citep{belkin2019does, liang2020just, bartlett2020benign, tsigler2020benign, muthukumar2020harmless}.

\paragraph{Is interpolation compatible with causal learning?}
On account of the parallels between statistical (out-of-sample) learning and causal (out-of-distribution) learning, it is therefore natural to ask: \textit{can interpolating estimators also learn good causal models?} For general classes of distribution shifts, one line of empirical work suggests that naively applying distributionally robust learning techniques such as importance reweighting or distributionally robust optimization approaches (which are equivalent to certain forms of regularization) may offer vanishing benefits over empirical risk minimization in overparameterized model classes \citep{byrd2019effect, sagawa2020investigation, gulrajani2021in}. However, there is also empirical evidence that suggests that augmenting such techniques with additional explicit norm-based regularization may help in learning distributionally robust models in the overparameterized regime \citep{sagawa2020investigation, donhauser2021interpolation}. In the context of causal learning, \citet{Jan:2019, vankadara2021causal} suggest that explicit regularization may help improve causal generalization. Furthermore, \citet{Jan:2019} conjectures generally that one may need to regularize more strongly for causal learning than for statistical learning. Existing work does not systematically assess the role of explicit regularization in causal learning, or correspondingly, whether interpolation is compatible with causal learning. In this work, we take a theoretical approach to systematically address these questions.
\subsection{Our Contributions}
We provide a first analysis of causal generalization from observational data in the modern, overparameterized and interpolating regime under a simple linear causal model. Specifically, we consider the interpolating minimum $l_2$ norm least-squares estimator as well the family of regularized ridge regression estimators in the proportional asymptotic regime. Subject to our model assumptions, we seek answers to the following questions: under what conditions can the optimal causal regularization parameter be $0$ or even negative, that is, do we observe \textit{benign causal overfitting}? Furthermore, if the optimal causal regularization parameter is strictly positive, how strongly do we need to regularize? How does the optimal causal regularization relate to the optimal statistical regularization? 
While our analysis is exhaustive, we emphasize the results under the assumption of independent causal mechanisms \citep{Jan:2010}, a standard assumption in causal learning.
\begin{itemize}
    \item \textbf{Precise asymptotics of the causal risk (Section~\ref{sec:theory_results}).} We provide precise asymptotics of the \textit{causal risk} of the ridge regression estimator as well as the minimum $l_2$ norm interpolating estimator in the high-dimensional setting: $n,d \rightarrow \infty, d/n \rightarrow \gamma \in (0, \infty)$. Our results confirm that, similar to the statistical setting, the causal generalization curve of the min-norm estimator exhibits the double-descent phenomenon. This is because the variance term diverges at the interpolation threshold and is decreasing in the overparameterized regime ($\gamma > 1$).
    \item \textbf{A measure of confounding strength $\genConf$ (Section~\ref{subsec:confounding_strength}).} We introduce a new measure of \textit{confounding strength} $\genConf$ that measures the relative contribution of the ``confounding signal'' to the ``causal signal''. This measure $\genConf$ can be interpreted as the strength of the distribution shift between the observational and interventional distributions. Under the assumption of independent causal mechanisms this measure is restricted to $[0,1]$ and induces a strict, model-independent ordering of the family of causal models that entail the same observational distribution.
    % we show that this measure induces a strict ordering on the family of causal models that correspond to the same statistical model, independent of the 
    %
    \item \textbf{Benign causal overfitting (Section~\ref{sec:benign}).} 
    % We show that in settings where the causal signal dominates the statistical signal ($\genConf < 0$), for certain regions of the overparameterization ratio $\gamma$ in both the \textit{underparameterized} ($\gamma < 1$) as well as \textit{overparameterized} ($\gamma > 1$) regimes, the optimal regularization can indeed be $0$ or negative even when the optimal statistical parameter is strictly positive. 
    % \TODO{We derive precise conditions that determine when the optimal causal regularization is strictly positive as well as conditions under which the optimal causal regularization can be negative.}
    We show that when the causal signal dominates the statistical signal ($\genConf < 0$), the optimal causal regularization can indeed be $0$ or negative even if the optimal statistical regularization is strictly positive. 
    This can happen both in the \textit{underparameterized regime} ($\gamma < 1$) as well as the \textit{overparameterized regime} ($\gamma > 1$).
    The size of this region grows as the causal signal increasingly dominates the statistical signal. Under the assumption of independent causal mechanisms, however, we show that there is no benign causal overfitting. This is in contrast to the statistical setting where the optimal regularization can be $0$ in the highly underparameterized regime ($\gamma \rightarrow 0$).
    \item \textbf{Optimal causal vs.\ statistical regularization (Section~\ref{sec:optimal_regularization}).} We show that one needs to regularize less strongly for causal learning than for statistical learning when the confounding strength $\genConf$ is negative.
    However, when $\genConf > 0$ and in particular under the principle of independent causal mechanisms, we show that one always needs to regularize more strongly for causal than for statistical learning. This resolves a recent conjecture in \citet{Jan:2019}. Indeed, our results show something stronger: the optimal causal parameter is a strictly increasing function in confounding strength. That is, as the confounding strength increases, one needs to regularize increasingly strongly for causal generalization and when $\genConf \geq 1$, one needs to regularize infinitely more for causal than for statistical learning. 
    % Finally, based on our results, we propose a simple method to choosing the correct regularization parameter if the confounding strength measure is either known or can be estimated. 
\end{itemize}
%%%%%%%%%%%%%%%%%%%%%%%%%%%%%%%%%%%%%%%%%%%%%%%%%%%%%%%%%%%%%%%%%%%%%%%%%%%%%%%%%%%%%%%%%%%%%%%%%%%%%
\section{Problem Setup}\label{sec:setup}
% \leena{Motivate why we analyse the linear model instead of a more complex one? The general reason is that analyzing causality is rather complicated in general and that is why most causality analysis (cite dominiks papers here) works in the linear setting and all the existing datasets are also linear }
%
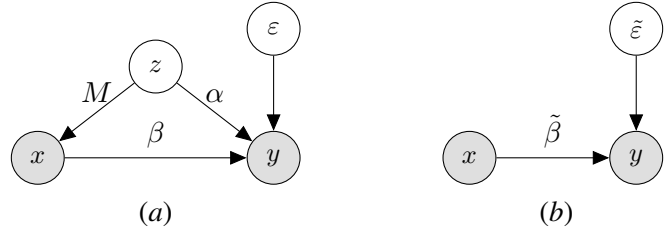
\begin{figure}[htbp]
\floatconts
    {fig:DAGs}
    {\caption{(\textit{a}) Graphical model of the causal model defined in \eqref{eq:causal_model}. (\textit{b}) The usual statistical model. In both figures, observed random variables are shaded and unobserved variables are white.}}
    {
        \subfigure{
        \label{fig:causal_DAG}
        \begin{tikzpicture}[x=1.5cm,y=1.0cm]
    % Nodes
    \node[obs] (x) {$x$} ; %
    \node[latent, above right=of x] (z) {$z$} ; %
    \node[obs, below right=of z] (y) {$y$} ; %
    \node[latent, above=of y] (eps) {$\varepsilon$} ; %

    % Edges
    \edge {x} {y}  ; %
    \edge {eps} {y} ; %
    \edge {z} {y} ; %
    \edge {z} {x} ; %
    
    % Label edges by redrawing invisible lines with regular tikz commands
    \path (x) -- (y) node [midway,above](TextNode){$\beta$};
    \path (z) -- (x) node [midway,above](TextNode){$M$};
    \path (z) -- (y) node [midway,above](TextNode){$\alpha$};
\end{tikzpicture}
        }
        \hspace{40pt} 
        \subfigure{
        \label{fig:statistical_DAG}
        \begin{tikzpicture}[x=1.5cm,y=1.0cm]
    % Nodes
    \node[obs] (x) {$x$} ; %
    \node[obs, right=of x] (y) {$y$} ; %
    \node[latent, above=of y] (eps) {$\tilde{\varepsilon}$} ; %

    % Edges
    \edge {x} {y} ; %
    \edge {eps} {y} ; %
    
    % Label edges by redrawing invisible lines with regular tikz commands
    \path (x) -- (y) node [midway,above](TextNode){$\tilde{\beta}$};
\end{tikzpicture}
        }
     }
\end{figure}

We consider a linear causal model with parameters $M\in\R^{d\times l}$, $\alpha\in\R^l, \beta\in\R^d$ with $l\geq d$ and $\sigmacaus>0$ described via the \textit{structural equations}
\begin{align}\label{eq:causal_model}
    z\sim\Gauss{0}{I_l}\,, \quad
    \varepsilon &\sim \Gauss{0}{\sigmacaus}\,,\quad
    x= Mz\,,\quad
    y= x^T\beta + z^T\alpha + \varepsilon\,.
\end{align}
The covariates $x\in\R^d$ and the observation $y\in\R$ are \textit{confounded} through $z$, which follows a standard normal distribution on $\R^l$. 
This structure implies that $ \expec x  = 0$ and the covariance of $x$ is $\Sigma\coloneqq \Cov x = MM^T$. A graphical representation of this causal model is given in Figure~\ref{fig:causal_DAG}.
% Observational joint distribution
The observational joint distribution of this causal model is given by $p(x,y)=p(x)p(y|x)$, where $x\sim \Gauss{0}{\Sigma}$ and $y|x\sim\Gauss{x^T\betaStat}{\Epssigmastat}$.
Here, the statistical parameter $\betaStat\coloneqq\beta+\Gamma$ consists of the causal parameter $\beta$ and a confounding parameter $\Gamma\coloneqq\Sigma^+M\alpha$, and $\Epssigmastat\coloneqq\sigmacaus+\norm{\alpha}^2-\norm{\Gamma}_\Sigma^2$ describes the statistical noise,\footnote{Note that $\lnorm{\alpha}^2-\norm{\Gamma}_\Sigma^2=\lnorm{\alpha}_{I-M^+M}^2\geq 0$, where $I-M^+M$ describes the orthogonal projection onto $\ker M$.} where $\norm{x}^2_\Sigma\coloneqq x^T\Sigma x$ denotes the generalized norm.
Note that the observational distribution alone cannot distinguish the causal model from the one in Figure~\ref{fig:statistical_DAG}.
% Statistical learning
% The goal of statistical learning is to predict $y$ after observing $x$, the observational conditional $p(y|x)$ based on i.i.d. samples $\mycurls{(x_i, y_i)}_{i=1}^n$ from the observational joint distribution $p(x,y)$.
The goal of \emph{statistical learning} is to predict $y$ after observing $x$, which is captured by the conditional distribution $p(y|x)$.
% Causal learning
In contrast, the goal of \emph{causal learning} is to predict $y$ after manipulating or intervening on $x$. This is formally captured by Pearl's $do$-calculus \citep{pearl2009causal}, which describes how interventions on random variables introduce a shift to the joint distribution.
Graphically, intervening on $x$ with the value $x_0$, denoted as $do(x=x_0)$, removes all arrows to $x$ in the graphical model and \textit{sets} $x=x_0$. 
In our causal model \eqref{eq:causal_model}, the intervention $do(x=x_0)$ removes the arrow from $z$ to $x$ and yields the updated structural causal equations
\begin{align*}
    z\sim\Gauss{0}{I_l}\,, \quad
    \varepsilon &\sim \Gauss{0}{\sigmacaus}\,,\quad x=x_0 \,,\quad
    y= x_0^T\beta + z^T\alpha + \varepsilon\,.
\end{align*}
The corresponding distribution of $y$ after intervening on $x$ is therefore given by $y|do(x=x_0)\sim\mathcal{N}(x_0^T\beta,\Epssigmastat-\norm{\Gamma}_\Sigma^2)$.
Since arbitrary interventions can introduce arbitrary shifts in the distribution, we consider the natural class of interventions drawn from the observational marginal distribution on $x$. This yields the interventional joint distribution $p_{do}(x, y)=p(x)p(y|do(x))$ with the slight abuse of notation $do(x)$ in which the random variable $x$ and its value coincide.

\paragraph{Causal learning from observational data}
Assume we are given i.i.d.\ samples $\mycurls{(x_i, y_i)}_{i=1}^n$ from the observational joint distribution $p(x,y)$, which we collect in $X\in\R^{n\times d}$ and $Y\in\R^n$. 
The usual statistical learning aims for the observational conditional $p(y|x)$, which means that train and test distributions coincide. Causal learning aims for the interventional conditional $p(y|do(x))$, a distribution shift problem for which train and test distributions differ.
We define the corresponding \emph{causal risk} $\riskCtotal$ and \emph{statistical risk} $\riskStotal$ of any linear regressor $\betaGen\in\R^d$ under the squared loss as
\begin{align}
    \riskCtotal(\betaGen) \coloneqq \expec_{x}\expec_{y|do(x)}(x^T\betaGen-y)^2 \quad\text{and}
   \quad \riskStotal(\betaGen) \coloneqq \expec_{x}\expec_{y|x}(x^T\betaGen-y)^2\,.\label{eq:causal_statistical_risk}
\end{align}
Under the causal model in Eq.~\eqref{eq:causal_model}, the risks are characterized by the following proposition, which is proven in Appendix~\ref{app:risk_formulas}.
\begin{restatable}[Causal and Statistical Risk]{propositionRestatable}{RiskFormulas}\label{prop:risk_formulas}
For any $\betaGen\in\R^d$, the causal and statistical risks defined in Eq.~\eqref{eq:causal_statistical_risk} satisfy
\begin{align*}
      \riskCtotal(\betaGen) = \norm{\betaGen-\beta}_\Sigma^2 + \Epssigmastat+\norm{\Gamma}_\Sigma^2 \quad\text{and}\quad
      \riskStotal(\betaGen) = \norm{\betaGen-\betaStat}_\Sigma^2 + \Epssigmastat\,.
\end{align*}
\end{restatable}
Therefore, $\beta$ is the optimal causal parameter and $\betaStat$ is the optimal statistical parameter.
In the following, we simply refer to them as causal and statistical parameters.

%%%%%%%%%%%%%%%%%%%%%%%%%%%%%%%%%%%%%%%%%%%%%%%%%%%%
\subsection{A New Measure of Confounding Strength}\label{subsec:confounding_strength}
Since the interventional distribution generally differs from the observational distribution, we require a measure that quantifies how this shift influences causal learning from observational data.
\paragraph{Signal-to-noise ratios (SNRs)} Before we define our measure of confounding strength, we first define the statistical and causal signal-to-noise ratios which help to intuitively understand our confounding strength measure. Recall that every causal model entails a statistical model since the causal parameter $\beta$ and the confounding parameter $\Gamma$ jointly specify the statistical parameter $\betaStat=\beta+\Gamma$. The statistical SNR is defined as usual by $\SNRstat\coloneqq \norm{\betaStat}^2/\Epssigmastat$.
For the causal SNR, a natural notion would be $\norm{\beta}^2/(\Epssigmastat-\norm{\Gamma}_\Sigma^2)$ if the learning algorithm had access to data from the interventional distribution $y|do(x)\sim\mathcal{N}(x^T\beta,\Epssigmastat-\norm{\Gamma}^2_\Sigma)$; but since we are constrained to data from the observational conditional $y\vert x \sim \Gauss{x^T\betaStat}{\Epssigmastat}$, the corresponding causal SNR, which quantifies the hardness of the learning problem, needs to take this into consideration. Accordingly, we consider the causal SNR as the ratio of the alignment between the statistical and causal parameters and the variance of the observational conditional. Formally, we define it as \mbox{$\SNRcaus\coloneqq \scalprod{\beta}{\betaStat} / \Epssigmastat$}. In what follows, we therefore often refer to $\scalprod{\beta}{\betaStat}$ as the \textit{causal signal} and $\norm{\betaStat}^2$ as the \emph{statistical signal}. 
Correspondingly, we refer to $\scalprod{\betaStat - \beta}{\betaStat} = \scalprod{\Gamma}{\betaStat}$ as the \emph{confounding signal}, which is the alignment between the confounding parameter $\Gamma$ and the statistical parameter $\betaStat$.
For the reader's convenience, we summarized this terminology in Table \ref{tab:signal_notation}.
\setlength{\tabcolsep}{10pt}
\begin{table}[t]
\caption{Terminology for signals in our causal model \eqref{eq:causal_model}.}
\label{tab:signal_notation}
\vskip 0.15in
\begin{center}
\begin{small}
\begin{tabular}{ccc}
\toprule
Causal parameter $\beta$ & Confounding parameter $\Gamma$ & Statistical parameter $\betaStat=\beta+\Gamma$ \\
\midrule 
Causal signal $\scalprod{\beta}{\betaStat}$ & Confounding signal $\scalprod{\Gamma}{\betaStat}$ & Statistical signal $\norm{\betaStat}^2$ \\
\bottomrule
\end{tabular}
\end{small}
\end{center}
\vskip -0.1in
\end{table}
\paragraph{Confounding strength} 
Regression on observational data implicitly assumes that the interventional distribution coincides with the observational distribution, while it can be shifted in general.
To quantify the impact of this distribution shift on the corresponding causal risk, we introduce a new \emph{confounding strength measure} $\genConf$. It measures the relative contribution of the confounding signal to the statistical signal and is defined by
\begin{align}\label{eq:confounding_strength}
    \genConf \coloneqq
    \frac{\scalprod{\Gamma}{\betaStat}}{\scalprod{\Gamma}{\betaStat} + \scalprod{\beta}{\betaStat}}
    =\frac{\scalprod{\Gamma}{\betaStat}}{\norm{\betaStat}^2}\,.
\end{align}
While other notions of confounding strength are possible, we will see later that this definition 
is well-suited to capture the shift strength for causal learning from observational data.
Without further restrictions, $\genConf$ can take any value in $\R$. 
The different regimes of $\genConf$ can be intuitively understood in terms of the causal signal and its relationship to the statistical signal. This measure divides causal models into the following three regimes:
\begin{itemize}
    \item $\genConf \geq1$: the causal signal $\scalprod{\beta}{\betaStat}$ is non-positive, which implies that causal and statistical parameters are orthogonal or negatively aligned. Learning the statistical parameter is adversarial to causal learning.
    \item $0 < \genConf < 1$: causal and statistical parameters are positively aligned but the causal signal is weaker than the statistical signal $\norm{\betaStat}^2$, for example $\beta=\betaStat/2$.
    \item $\genConf \leq 0$: the causal signal dominates the statistical signal, for example $\beta=2\betaStat$.
    % \luca{The current phrasing of this intuition seems wrong/slightly misses the point}
\end{itemize}
The SNRs are related to the confounding strength measure via $\SNRcaus=(1-\genConf)\SNRstat$. In particular, the causal signal decreases as the confounding strength increases.

\paragraph{The regime $0\leq\genConf\leq 1$ is practically most relevant}
Causal learning often requires strong assumptions because causal models cannot be uniquely identified by their observational distribution. A standard assumption is the principle of independent causal mechanisms (ICM) \citep{Jan:2010, Lem:2012, Pet:2017}, which informally asserts that the causal mechanisms share no information.
In our causal model \eqref{eq:causal_model}, a corresponding assumption could be that the causal mechanisms $\beta$ and $\Gamma$ are drawn from rotationally invariant distributions. This implies that $\scalprod{\beta}{\Gamma}\to 0$ as $d\to\infty$, which in turn falls in the regime $0\leq\genConf\leq 1$. 
While our following analysis covers all possible causal models, we pay special attention to this regime because it might be of most practical relevance.
Note that for $\scalprod{\beta}{\Gamma}=0$, our measure of confounding strength coincides with the structural strength of confounding measure $\structConf=\norm{\Gamma}^2/(\norm{\Gamma}^2+\norm{\beta}^2)$ introduced by \citet{Jan:2017}. It measures the relative contribution of causal and confounding signal in terms of lengths rather than inner products with the statistical signal. 

\section{Causal and Statistical Risk of High-Dimensional Regression Models}\label{sec:theory_results}
We consider two linear regression models for learning causal models from observational data $X,Y$: min-norm interpolation and ridge regression.
% Min-norm
The \emph{min-norm interpolator} is the minimum $l_2$ norm solution to the least squares regression problem
\begin{equation}\label{eq:min_norm_optimization}
    \betaMinNorm(X,Y) \coloneqq \argmin \{\norm{\betaGen}_2: \betaGen \in \argmin \limits_{\betaGen \in \R^d} \norm{Y - X \betaGen}^2\}.
\end{equation}
 A closed form is given by 
$\betaMinNorm(X,Y)=(X^TX)^+X^TY$, where $A^+$ denotes the Moore-Penrose inverse of $A$.
% Ridge
For $\lambda>0$, the \emph{ridge regression estimator} solves the regularized least squares problem 
\begin{align}\label{eq:ridge_optimization}
    \betaRidge(X,Y) \coloneqq \argmin_{\betaGen\in\R^d}\frac{1}{n}\norm{Y - X \betaGen}^2+\lambda\lVert \betaGen\rVert^2\,,
\end{align}
which has the explicit solution $\betaRidge(X,Y)=(X^TX+n\lambda I_d)^{-1}X^TY$. 
The min-norm interpolator can be obtained as a limiting case from the ridge regression solution via $\betaMinNorm(X,Y)=\lim_{\lambda\rightarrow 0^+}\betaRidge(X,Y)$. Whenever it is clear from the context, we drop the dependence of the predictors on $X$ and $Y$.
%%%%%%%%%%%%%%%%%%%%%%%%%%%%%%%%%%%%%%
\subsection{Precise Asymptotics of the Causal and Statistical Risks}\label{subsec:main_results}
In this section, we provide precise asymptotics for the causal and statistical risks of the min-norm interpolator and ridge regression solutions in the high-dimensional regime. This regime is characterized by both $n,d\to\infty$ such that $d/n\to\gamma \in (0, \infty)$, where $\gamma$ is called the \textit{overparameterization ratio}. We distinguish between the \textit{underparameterized regime} ($\gamma<1$) and the \textit{overparameterized regime} ($\gamma>1$).
% We present our main results in Section~\ref{subsec:main_results}, introduce a measure of confounding strength that serves as the basis for understanding the hardness of causal generalization in Section~\ref{subsec:confounding_strength}, and give a basic exposition of the results in Section~\ref{subsec:basic_exp}.
% We postpone the interpretation of the results to Section~\ref{sec:benign}, which investigates whether the min-norm interpolator can optimal, and Section~\ref{sec:optimal_regularization}, which investigates the optimal causal regularization parameter.
All proofs for this section are deferred to Appendix~\ref{app:causal_results}.
Since the predictors are random variables in the training data $X,Y$, so is their corresponding causal risk.
We consider the expectation of the risk under $Y$ conditioned on $X$. According to Proposition~\ref{prop:risk_formulas}, it is given by $\riskCGen\coloneqq\expec_{Y|X}\riskCtotal(\betaGen)=\expec_{Y|X}\norm{\betaGen-\beta}_\Sigma^2+\Epssigmastat+\norm{\Gamma}^2_\Sigma\,.$
Due to its simple form, similar to the usual statistical risk, the causal excess risk can be decomposed into bias and variance:
\begin{align}\label{eq:bv_decomp}
    \expec_{Y|X}\norm{\betaGen-\beta}_\Sigma^2
    =\underbrace{\norm{\expec_{Y|X}\betaRidge-\beta}^2_{\Sigma}}_{\eqqcolon\biasCRidge}  + \underbrace{ \expec_{Y \vert X} \norm{\betaRidge - \expec_{Y \vert X} \betaRidge}_{\Sigma}^2}_{\eqqcolon\varianceCRidge}\,.
\end{align}
The next theorem is our main result, which gives a closed-form expression for the limiting causal bias and variance of the min-norm interpolator and ridge regression estimators. 
We make the simplifying assumption of isotropic covariance $\Sigma=I_d$. Our proofs rely on results from random matrix theory following arguments similar to \citet{dicker2016ridge, Dob:2018, Has:2019}. They can similarly be extended to arbitrary covariances under boundedness assumptions on the spectrum. Since the isotropic causal model already exhibits rather rich behavior, we focus on thoroughly understanding this setting and leave such extensions for future work.

%%%% Theorem: causal min-norm
% \begin{restatable}[Limiting Causal Bias and Variance for the Min-Norm Interpolator]{theorem}{MinNormAsymptotics}\label{thm:causal_min_norm}Let $\norm{\beta}^2=\signalcaus$, $\norm{\Gamma}^2=\omega^2$, $\scalprod{\Gamma}{\beta} = \eta$, and $\sigma_{\tilde{\epsilon}}^2 = \sigmastat$. Then as $n,d\to\infty$ such that $d/n\to\gamma\in (0,\infty)$, it holds almost surely that
% \begin{align}
%     \biasCMinNorm\to
%     \limBiasArgC{0}=
%     \begin{cases}
%     \omega^2, &\gamma<1\\
%     \omega^2+ (\signalcaus-\omega^2)(1-\frac{1}{\gamma}), &\gamma>1
%     \end{cases}
%     \,,\quad
%     \varianceCMinNorm \to
%     \limVarianceArgC{0}=
%     \begin{cases}
%     \sigmastat\frac{\gamma}{1-\gamma}, &\gamma<1\\
%     \sigmastat\frac{1}{\gamma-1}, &\gamma>1
%     \end{cases}\,.
% \end{align}
% Therefore $\riskCMinNorm\to\limRiskArgC{0}=\limBiasArgC{0}+\limVarianceArgC{0}+\sigmastat+\omega^2$.
% %
% \end{restatable}
%%%% Theorem: causal ridge
\begin{restatable}[Limiting Causal Bias-Variance Decomposition for the Ridge Estimator]{theorem}{CausalRidge}\label{thm:causal_ridge}
Let $\norm{\beta}^2=\signalcaus$, $\norm{\Gamma}^2=\omega^2$, $\scalprod{\Gamma}{\beta} = \eta$, and $\sigma_{\tilde{\epsilon}}^2 = \sigmastat$. Then as $n,d\to\infty$ such that $d/n\to\gamma\in (0,\infty)$, it holds almost surely in $X$ for every $\lambda>0$ that
\begin{align}
    \biasCRidge&\to
    \limBiasArgC{\lambda}=\omega^2 +  \signalstat \lambda^2 m'(-\lambda)-  2 ( \omega^2 + \eta) \lambda m(-\lambda)\quad\text{and}\label{eq:limBiasCridge}\\
     \varianceCRidge&\to
     \limVarianceArgC{\lambda}=\sigmastat\gamma(m(-\lambda) - \lambda m'(-\lambda))\,,\label{eq:limVarianceCridge}
\end{align}
where 
% $m(\lambda) = \frac{(1 - \gamma - \lambda) -  \sqrt{(1  -  \gamma  -  \lambda)^2 - 4 \gamma \lambda}}{2 \gamma \lambda}$ 
$m(\lambda) = ((1 - \gamma - \lambda) -  \sqrt{(1  -  \gamma  -  \lambda)^2 - 4 \gamma \lambda}) / (2 \gamma \lambda)$ 
and $\signalstat = r^2 + \omega^2 + 2 \eta$.
Therefore $\riskCRidge\to\limRiskArgC{\lambda}=\limBiasArgC{\lambda}+\limVarianceArgC{\lambda}+\sigmastat+\omega^2$. 
The corresponding limiting quantities for the min-norm interpolator can be obtained by taking the limit $\lambda\to 0^+$ in \eqref{eq:limBiasCridge} and \eqref{eq:limVarianceCridge}.
\end{restatable}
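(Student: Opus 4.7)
}
The plan is to reduce $\biasCRidge$ and $\varianceCRidge$ to bilinear forms and traces in the resolvent $R_\lambda := (\hat\Sigma + \lambda I_d)^{-1}$ of the sample covariance $\hat\Sigma := X^T X/n$, and then to invoke Marchenko--Pastur asymptotics for the isotropic design. Since $Y = X\betaStat + \tilde\varepsilon$ with $\tilde\varepsilon \mid X \sim \Gauss{0}{\sigmastat I_n}$, the closed form $\betaRidge = R_\lambda X^T Y / n$ gives $\expec_{Y\mid X}\betaRidge = R_\lambda \hat\Sigma \betaStat = (I_d - \lambda R_\lambda)\betaStat$ and $\betaRidge - \expec_{Y\mid X}\betaRidge = R_\lambda X^T \tilde\varepsilon / n$. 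With $\Sigma = I_d$ and $\betaStat - \beta = \Gamma$, the bias collapses to
\[
\biasCRidge = \lVert \Gamma - \lambda R_\lambda \betaStat \rVert^2 = \omega^2 - 2\lambda\, \betaStat^T R_\lambda \Gamma + \lambda^2\, \betaStat^T R_\lambda^2 \betaStat,
\]
and, using the identity $\hat\Sigma R_\lambda = I_d - \lambda R_\lambda$, the variance becomes
\[
\varianceCRidge = \frac{\sigmastat}{n}\, \mathrm{tr}(\hat\Sigma R_\lambda^2) = \frac{\sigmastat}{n}\bigl[\mathrm{tr}\, R_\lambda - \lambda\, \mathrm{tr}\, R_\lambda^2\bigr].
\]

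Next I would invoke two classical random matrix theory facts for isotropic sample covariances at aspect ratio $d/n \to \gamma$. First, the Marchenko--Pastur theorem yields $\frac{1}{d}\mathrm{tr}\,R_\lambda \to m(-\lambda)$ almost surely, and since $\lambda \mapsto \frac{1}{d}\mathrm{tr}\,R_\lambda$ is analytic and uniformly bounded on compact subsets of $(0,\infty)$, Vitali's theorem transfers the convergence to derivatives, i.e.\ $\frac{1}{d}\mathrm{tr}\,R_\lambda^2 \to m'(-\lambda)$ almost surely. Second, for any deterministic vectors $u, v$ of bounded norm, the deterministic-equivalent statement
\[
u^T R_\lambda v - \bigl(\tfrac{1}{d}\mathrm{tr}\, R_\lambda\bigr) u^T v \to 0 \quad\text{a.s.},
\]
and the analogous statement for $R_\lambda^2$, both follow from Hanson--Wright / Bai--Silverstein concentration for quadratic forms in the isotropic Gaussian design---this is the standard tool used in \citet{dicker2016ridge, Dob:2018, Has:2019}. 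Applying it with $(u,v) = (\betaStat, \Gamma)$ and $(u,v) = (\betaStat, \betaStat)$, and using $\langle \betaStat, \Gamma\rangle = \omega^2 + \eta$ together with $\lVert\betaStat\rVert^2 = \signalstat$, gives $\betaStat^T R_\lambda \Gamma \to (\omega^2 + \eta)\, m(-\lambda)$ and $\betaStat^T R_\lambda^2 \betaStat \to \signalstat\, m'(-\lambda)$. Substituting these into the displayed formulas and using $d/n \to \gamma$ in the variance produces exactly \eqref{eq:limBiasCridge} and \eqref{eq:limVarianceCridge}.

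For the min-norm interpolator I would then pass $\lambda \to 0^+$ in the ridge formulas. In the underparameterized regime $\gamma < 1$, $m(-\lambda)$ extends continuously to $m(0) = 1/(1-\gamma)$, so the bias collapses to $\omega^2$ and the variance to $\sigmastat\,\gamma/(1-\gamma)$, matching the classical least-squares result. In the overparameterized regime $\gamma > 1$, the MP law places mass $1 - 1/\gamma$ at the origin, so $m(-\lambda) \sim (1 - 1/\gamma)/\lambda$ as $\lambda \downarrow 0$; one has to keep the combinations $\lambda m(-\lambda)$ and $\lambda^2 m'(-\lambda)$ intact, since they retain finite limits equal to $1 - 1/\gamma$ while the raw $m$ and $m'$ terms diverge. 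These limits coincide with the direct pseudo-inverse computation for $\betaMinNorm = (X^T X)^+ X^T Y$.

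The main technical hurdle is the deterministic-equivalent concentration for the two bilinear forms in the bias: trace convergence alone is not enough, and one needs quantitative control on the fluctuations $u^T R_\lambda v - \langle u, v\rangle\, \tfrac{1}{d}\mathrm{tr}\, R_\lambda$ for the specific deterministic vectors $\betaStat$ and $\Gamma$ arising from the causal model. These statements are, however, essentially off-the-shelf in the ridge regression literature cited in the theorem, and once they are established the remainder of the proof is algebraic substitution followed by the $\lambda\to 0^+$ bookkeeping outlined above.
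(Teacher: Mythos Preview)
Your proposal is correct and follows essentially the same route as the paper: the same resolvent expansion of bias and variance, Marchenko--Pastur for the normalized trace, a deterministic-equivalent/quadratic-form result for the bilinear terms (the paper cites Rubio--Mestre rather than Hanson--Wright, but these play the same role), and Vitali's theorem to pass to the squared resolvent. The only cosmetic difference is that the paper obtains $\betaStat^T R_\lambda^2 \betaStat$ via the derivative trick $R_\lambda^2 = -\partial_\lambda R_\lambda$ combined with Vitali, rather than invoking a separate concentration statement for $R_\lambda^2$ directly.
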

From these limiting expressions we can see that the causal risk curve of the min-norm interpolator exhibits the double descent phenomenon: it diverges at the interpolation threshold $\gamma=1$ due to the variance term and decreases again for $\gamma>1$. A corresponding visualization is given in Figure~\ref{fig:causal_bv_decomp}.
Explicit regularization dampens the divergence of the variance term.

%%%%%%%%%%%%%%%%%%%%%%%%%%%%%%%%
While we are primarily interested in the causal risk, the corresponding statistical risk serves as a natural baseline. 
An analogue set of results for the statistical risk is given in Appendix~\ref{app:statistical_asymptotics}. 
These results have already been derived by \citet{Has:2019} and can also be recovered as a special case of our causal results: for fixed statistical parameters $\betaStat$ and $\Epssigmastat$, the statistical risk coincides with the causal risk of an unconfounded causal model defined with $\beta=\betaStat$, $\sigmacaus=\Epssigmastat$, and $\alpha=0$. In particular, the corresponding statistical limiting expressions are the same as in Theorem~\ref{thm:causal_ridge} after setting $\eta=\omega^2=0$.

%%%% Theorem: statistical ridge and min-norm
%
\paragraph{Optimal statistical and causal regularization} By directly optimizing the closed form expressions for limiting causal and statistical risks we can find the optimal causal and statistical regularization. For any $\gamma \in (0, \infty)$, the optimal statistical regularization $\lambdaS(\gamma) \coloneqq \arginf_{\lambda \in (0, \infty)} \limRiskArgS{\lambda}$ can be expressed in closed-form as $\lambdaS(\gamma)=\SNRstat^{-1}\gamma$. The closed-form expression for the optimal causal regularization parameter $\lambdaC(\gamma) \coloneqq \arginf_{\lambda \in (0, \infty)} \limRiskArgC{\lambda}$ is a root of a $4$th order polynomial and as such considerably intricate. For readability, we do not include it here. We investigate the behavior of the optimal causal and statistical regularization in Section~\ref{sec:benign}~and~\ref{sec:optimal_regularization}.
%%%%%%%%%%%%%%
\subsection{Basic Behavior of the Limiting Risk}\label{subsec:basic_exp}
\begin{figure}%[!htb]
    \centering
    \begin{minipage}[t]{.48\textwidth}
        \centering
        \floatconts
        {fig:expo_risk_min_norm}% label
        {\caption{Limiting causal excess risk $\limRiskArgC{0}$ (without the constant $\sigmastat+\omega^2$) of the min-norm interpolator for different causal signal strengths $S$. Dashed lines are the corresponding null-risks $\omega^2$, which are outperformed more often as $S$ increases. For $\gamma<1$, all three curves coincide.}}% caption command
        {\includegraphics{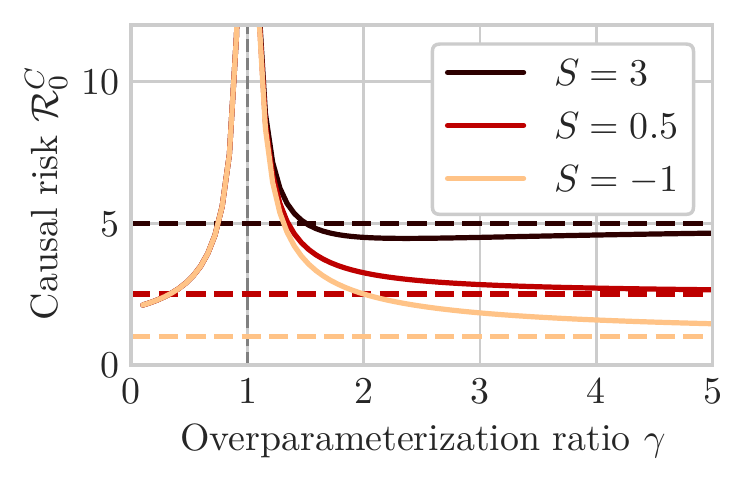}}
    \end{minipage}\hfill%
    \begin{minipage}[t]{0.48\textwidth}
        \centering
        \floatconts
        {fig:causal_bv_decomp}% label
        {\caption{Limiting bias-variance decomposition and excess risk of the min-norm interpolator (black) and optimally regularized ridge regression (red). Crosses indicate finite-sample risks of $n=d/\gamma$ samples with $d=300$, which are well-predicted by the limiting risk.}}% caption command
        {\includegraphics{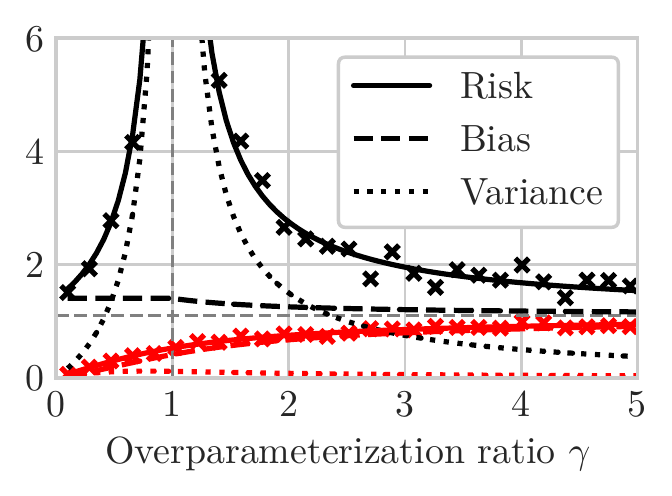}}
    \end{minipage}
\end{figure}
We start to analyze the results by assessing the basic behavior of the limiting causal risk. 
The causal risk of the null estimator $\betaGen=0$ serves as a natural baseline to evaluate the performance of the the min-norm interpolator and the ridge regression estimators.

% The null causal and statistical risks are defined as the correspo the null predictor $\betaGen=0$ are given by $\norm{\beta}^2 + \Epssigmastat - \norm{\Gamma}^2$ and $\norm{\betaStat}^2$ respectively. a natural baseline to evaluate the performance of the the min-norm interpolator and the ridge regression estimator.
% We focus on the min-norm interpolator, an extensive treatment of the general ridge-regression case is given in the following two sections. 

\paragraph{Regimes of the min-norm interpolator}
Theorem~\ref{thm:causal_ridge} characterizes the limiting causal risk of the min-norm interpolator.
Its behavior is controlled by the causal signal-to-noise ratio, which we defined as $\SNRcaus=(1-\zeta)\SNRstat$. However, as we will later see, the causal risk of the min-norm interpolator can be lower than null risk only when $\genConf < 0.5$. To distinguish the regimes of the min-norm estimator, its therefore convenient to consider the closely related quantity $S = (1 - 2 \genConf)\SNRstat$. It distinguishes between three different regimes (visualized in Figure~\ref{fig:expo_risk_min_norm}).
% in less confounded settings (up to $\genConf=.5$) than the optimally regularized ridge solution (up to $\genConf=1$), because it fits the data more strongly. It is therefore more convenient to consider the related quantity $\SNRminnorm=(1-2\zeta)\SNRstat=(r^2-\omega^2)/\sigmastat$ at this point. 
%
\begin{itemize}%[label=(\roman*)]
    \item For $\SNRminnorm>1$, the causal signal dominates the noise and the min-norm estimator can perform better than null risk in both under- and overparameterized regime.  
    \item For $0\leq\SNRminnorm\leq 1$, the causal signal is weaker than the noise. Only the underparameterized regime can beat the null risk, whereas the overparameterized regime is always worse.
    \item The previous two cases resemble the behavior of the statistical risk in the corresponding regimes of the statistical SNR. 
    Contrary to the statistical risk, however, the causal risk admits a third regime $\SNRminnorm<0$. 
    In this case, the min-norm estimator always performs worse than null risk.
    Here, the causal signal $\scalprod{\beta}{\betaStat}$ is dominated by the confounding signal $\scalprod{\Gamma}{\betaStat}$, and interpolating the observational data overfits to the confounding. 
\end{itemize}

\paragraph{Bias and variance} 
The bias-variance decomposition of the causal risk given in Theorem~\ref{thm:causal_ridge} is visualized in Figure~\ref{fig:causal_bv_decomp} for the min-norm interpolator and the optimally ridge-regularized regressor. 
The figure also shows the causal risk based on finite samples from the model, which is in high agreement with our asymptotic results.
We compare the causal risk to the corresponding statistical risk.
First note that the causal and statistical variance terms coincide exactly for both the min-norm estimator and ridge regressors.
This is because the variance term of the squared loss depends only on the variance in the training data, but not on the target parameter $\beta$ or $\betaStat$. Since the training data are the same for both causal and statistical learning, the variance terms trivially coincide.

For the min-norm estimator, as in the statistical case, the variance term causes the double-descent behavior of the causal risk curve because it explodes at the interpolation threshold $\gamma=1$ and is decreasing in the overparameterized regime $\gamma > 1$.
% Consequentially, all difference in behavior between causal and statistical risk is due to the bias terms.
In the statistical setting, the bias strictly increases in the overparameterized regime and as a consequence, the best risk is always achieved in the underparameterized setting. In contrast, the causal bias of the min-norm interpolator can be decreasing in the overparameterized regime and therefore the optimal causal risk can be achieved in the highly overparameterized regime $\gamma \rightarrow \infty$. However, this only happens in the regime $S < 0$ where the risk of the min-norm interpolator is always worse than null risk.
%
% While the statistical bias is an increasing function in $\gamma$, the causal bias of the min-norm interpolator can be decreasing in the overparameterized regime under strongly confounded models. However, this only happens when the min-norm interpolator is worse than null risk.

As shown in Figure \ref{fig:causal_bv_decomp}, the causal risk of the optimally regularized ridge regression estimator is always below that of the min-norm risk which is trivial. Similar to the statistical setting, the corresponding generalization curve does not exhibit the double descent phenomenon. There are qualitatively different reasons for why regularization helps in statistical and causal learning. 
For both statistical and causal learning, regularization decreases the shared variance, which corresponds to the finite-sample error. 
However, while the statistical bias always increases with regularization, the causal bias can actually decrease. This implies that regularization not only helps with the finite-sample error, but can also reduce the error due to confounding.

%%% Confounding strength
% First result: ordering of causal models
\paragraph{Higher confounding implies higher causal risk for all $\lambda$.}
So far, we have investigated the causal risk under a single causal model,
but we can compare different causal models using the confounding strength measure $\genConf$ introduced in Section~\ref{subsec:confounding_strength}.
The next proposition shows that $\genConf$ governs the hardness of causal learning from observational data. Specifically, the causal risk of the ridge regression estimator for any $\lambda \in (0, \infty)$ increases as the causal model becomes more confounded. A proof is given in Appendix~\ref{app:conf_strength_ordering}.  
\begin{restatable}[Causal Risk Increases with Confounding Strength]{propositionRestatable}{orderingCausal}\label{prop:conf_strength_ordering} Consider the family of causal models parameterized as in (\ref{eq:causal_model}) that entail the same observational distribution. 
Let $C_1$ and $C_2$ be two such causal models with confounding strengths $\genConf_1$ and $\genConf_2$ and alignments $\eta_1$ and $\eta_2$ (defined in Theorem~\ref{thm:causal_ridge}), respectively. Then for all $\lambda,\gamma \in (0, \infty)$, 
\begin{equation*}
    %  \label{eq:ordering_causal_gen}
     \genConf_1 > \genConf_2, \; \; \eta_1 \leq \eta_2 \implies \limRisk_\lambda^{C_1} > \limRisk_\lambda^{C_2}.
\end{equation*}
In particular, for any fixed $\eta$, the measure of confounding strength $\genConf$ establishes a strict ordering of causal models.
% , that is, $\limRiskC$ is increasing in $\genConf$. 
This includes the ICM under which $\eta=0$.
\end{restatable}
%
% Compare this result to \citet{Tri:2021} who consider a similar problem of learning under covariate shift. They define a model-independent notion of shift strength and show that harder shifts increase the risk.
% Proposition~\ref{prop:conf_strength_ordering} states that the confounding strength measure $\genConf$ plays the same role for causal learning as their measure does for covariate shift.
%%%%%%%%%%%%%%%%%%%%%%%%%%%%%%%%%%%%%%%%%%%%%%%%%%%%%%%%%%%%%%%%%%%%%%
\section{Benign Causal Overfitting}\label{sec:benign}
% It has been repeatedly documented now that even in highly complex models, optimal statistical generalization can be achieved when regularization is zero or even negative. This phenomenon is often referred to as benign overfitting or harmless interpolation. \TODO{Cite the literature}. This is prima
A large number of recent works suggest that the minimum-norm interpolating estimator can be optimal for statistical generalization \citep{belkin2018understand, belkin2019does, muthukumar2020harmless}. This phenomenon is often referred to as benign overfitting. Moreover, the optimal statistical generalization may even be achieved when regularization is negative \citep{kobak2020optimal, bartlett2020benign, tsigler2020benign}. It is unclear, however, if such interpolating estimators, which have implicit small-norm biases, can also be optimal when there is a shift between the training and test distributions. In particular, we ask: can optimal causal regularization be $0$ or even negative, that is, do we observe \textit{benign causal overfitting?} To show that the optimal regularization can be negative, we simply show that the derivative of the causal risk at $0$ is positive. We summarize our key findings in Theorem \ref{thm:negative_causal_reg}.
\begin{restatable}[Optimal Regularization can be Negative]{theorem}{NegativeCausalReg}
\label{thm:negative_causal_reg}
% \luca{Change formulation: make clear that the two cases here are for whether $\lambdaC\leq 0$ or not. Right now, it distinguishes between $\genConf$ cases, which is only part of the condition}
For any causal model parameterized as in \eqref{eq:causal_model}, the following cases distinguish between whether the min-norm interpolator is optimal or not.
\begin{enumerate}
    \item For negative confounding strength $\genConf<0$ the optimal causal regularization $\lambdaC$ can be $0$ or even negative. A necessary and sufficient condition for $\lambdaC \leq 0$ depends on the difference in causal and statistical signal-to-noise ratios and is given by
% \begin{equation*}
%     \SNRcaus - \SNRstat \geq \begin{cases}
%     \frac{\gamma}{(1 - \gamma)^2}, &\gamma < 1 \\
%     \frac{\gamma^2}{(1 - \gamma)^2}, &\gamma > 1
%     \end{cases}
% \end{equation*}
\begin{equation*}
    \SNRcaus - \SNRstat \geq\frac{\gamma \max{\{1,\gamma\}}}{(1 - \gamma)^2}\,.
\end{equation*}
\item However, when $\genConf>0$ the optimal causal regularization is strictly positive $\lambdaC > 0$ and $\limRiskArgC{0}>\limRiskArgC{\lambdaC}$, hence regularization provides non-vanishing benefits. This includes the ICM.
\end{enumerate} 
\end{restatable}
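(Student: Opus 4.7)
The plan is to characterize $\lambdaC$ via the sign of the one-sided derivative $\partial_\lambda \limRiskArgC{\lambda}|_{\lambda=0^+}$ obtained from the closed-form expressions in Theorem~\ref{thm:causal_ridge}. Since the limiting risk $\limRiskArgC{\lambda}$ is smooth in $\lambda>0$ and, as a rational function of $m(-\lambda)$, admits a unique minimizer on the extended domain (as encoded by the 4th-order polynomial mentioned after the theorem), the equivalence $\lambdaC \leq 0 \iff \partial_\lambda \limRiskArgC{0^+} \geq 0$ will hold. Part~2 will then follow from strict negativity of this derivative and continuity of $\limRiskArgC{\cdot}$.

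\textbf{Step 1: Taylor expansion of $m(-\lambda)$ near $0^+$.} I would exploit the Marchenko--Pastur self-consistent relation $\gamma z\, m(z)^2 + (z+\gamma-1)\,m(z) + 1 = 0$ to compute the expansions in the two regimes separately. For $\gamma<1$, $m$ is regular at the origin and matching coefficients yields
\[
m(-\lambda) = \tfrac{1}{1-\gamma} + \tfrac{\lambda}{(1-\gamma)^3} + O(\lambda^2),
\]
while for $\gamma>1$, $m(-\lambda)$ blows up like $1/\lambda$ and a three-term expansion gives
\[
m(-\lambda) = \tfrac{\gamma-1}{\gamma\lambda} + \tfrac{1}{\gamma(\gamma-1)} - \tfrac{\lambda}{(\gamma-1)^3} + O(\lambda^2).
\]
The corresponding expansions of $m'(-\lambda)$ follow by differentiation. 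A preliminary check is that in the overparameterized regime, the $1/\lambda$-singularities appearing in $\signalstat \lambda^2 m'(-\lambda)$ and $\sigmastat\gamma(m(-\lambda)-\lambda m'(-\lambda))$ cancel when assembling the full risk, ensuring a finite derivative at $0^+$.

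\textbf{Step 2: Compute the derivative and translate to SNRs.} Substituting the expansions into the derivatives of $\limBiasArgC{\lambda}$ and $\limVarianceArgC{\lambda}$ yields, after cancellation, an expression of the form
\[
\partial_\lambda \limRiskArgC{0^+} = -2\, c_\gamma \left[(\omega^2+\eta) + \tfrac{\sigmastat\,\gamma \max\{1,\gamma\}}{(1-\gamma)^2}\right]
\]
for some $\gamma$-dependent positive constant $c_\gamma$ (equal to $1/(1-\gamma)$ when $\gamma<1$ and $1/(\gamma(\gamma-1))$ when $\gamma>1$). Using the identities $\omega^2+\eta=\scalprod{\Gamma}{\betaStat}=\zeta\,\signalstat$ and $\sigmastat/\signalstat=\SNRstat^{-1}$, the sign condition $\partial_\lambda\limRiskArgC{0^+}\geq 0$ becomes $-\zeta\,\SNRstat \geq \gamma\max\{1,\gamma\}/(1-\gamma)^2$; recalling $\SNRcaus-\SNRstat = -\zeta\,\SNRstat$ proves the threshold in Part~1 and shows it forces $\zeta<0$ since the right-hand side is positive. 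For Part~2, $\zeta>0$ makes the bracket strictly positive, so the derivative is strictly negative, giving $\lambdaC>0$ and $\limRiskArgC{0^+} > \limRiskArgC{\lambdaC}$ with a non-vanishing gap by continuity.

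\textbf{Main obstacle.} The principal difficulty is the qualitatively different behavior of $m(-\lambda)$ on either side of the interpolation threshold: bounded for $\gamma<1$, but with a $1/\lambda$ singularity for $\gamma>1$. This forces Taylor expansions of different orders in each regime and demands verifying that the singular contributions of the bias and variance conspire to cancel. The $\max\{1,\gamma\}$ factor in the final threshold is not put in by hand but emerges precisely from unifying the two expansions into a single expression. A secondary bookkeeping check is that the sign of $c_\gamma$ is the same in both regimes, so that a single inequality captures both cases.
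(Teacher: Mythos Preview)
Your Taylor-expansion computation of $\partial_\lambda\limRiskArgC{0^+}$ is correct and does produce the threshold $\gamma\max\{1,\gamma\}/(1-\gamma)^2$; this is a legitimate alternative to what the paper does, namely evaluate the closed-form derivative
\[
\partial_\lambda\limRiskArgC{\lambda}=\frac{2\signalstat}{\varphi(\lambda)^{3/2}}\Bigl(\lambda-\SNRstat^{-1}\gamma-\frac{\genConf}{2\gamma}\bigl(1+\lambda+\gamma-\sqrt{\varphi(\lambda)}\bigr)\varphi(\lambda)\Bigr),\qquad \varphi(\lambda)=(1+\lambda+\gamma)^2-4\gamma,
\]
directly at $\lambda=0$. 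Your Part~2 is also fine: $\partial_\lambda\limRiskArgC{0^+}<0$ immediately gives $\lambdaC>0$ and a strict gap $\limRiskArgC{0}>\limRiskArgC{\lambdaC}$.

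The genuine gap is the ``if and only if'' in Part~1. You justify $\lambdaC\leq 0\iff\partial_\lambda\limRiskArgC{0^+}\geq 0$ by asserting a unique minimizer ``encoded by the 4th-order polynomial'', but that polynomial arises only after clearing the square root and squaring, so it can carry spurious roots and in any case a quartic may have several real ones; nothing in your argument rules out the derivative being non-negative at $0^+$ yet turning negative for larger $\lambda$, which would put the infimum strictly inside $(0,\infty)$ and break the biconditional. A local expansion at $0^+$ cannot by itself upgrade ``derivative non-negative at $0^+$'' to ``risk non-decreasing on all of $(0,\infty)$''. The paper closes this gap by a global monotonicity step: rewriting the sign condition $\partial_\lambda\limRiskArgC{\lambda}\geq 0$ as $\genConf\leq f(\lambda,\gamma,\SNRstat)$ with
\[
f(\lambda,\gamma,\SNRstat)=\frac{2\gamma\,(\lambda-\SNRstat^{-1}\gamma)}{\bigl(1+\lambda+\gamma-\sqrt{\varphi(\lambda)}\bigr)\,\varphi(\lambda)},
\]
and proving that $f$ is \emph{increasing} in $\lambda$ with $f(0^+,\gamma,\SNRstat)=-\SNRstat^{-1}\gamma\max\{1,\gamma\}/(1-\gamma)^2$ and $f(\lambda,\gamma,\SNRstat)\to 1$ as $\lambda\to\infty$. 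Monotonicity is exactly what makes the inequality at $0^+$ equivalent to the inequality for all $\lambda>0$, and hence to $\lambdaC=0$. You need an analogous global ingredient; the appeal to the quartic does not supply it.
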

In the highly overparameterized regime $(\gamma \rightarrow \infty)$, the benefit of explicit regularization vanishes and both the causal and statistical risks of the ridge regression estimator converge to their corresponding null risks independent of the choice of regularization. We do not refer to this phenomenon as benign overfitting. 
The result is rather intuitive if interpreted via our measures of the causal and statistical SNRs and the confounding strength measure. When the causal SNR is larger than the statistical SNR ($\genConf < 0$) and for certain regions of the parameter space of $\gamma$, the optimal causal regularization can be zero or even negative. This phenomenon can be observed in both the underparameterized as well as the overparameterized regime. With increasing dominance of the causal signal over the statistical signal, the range of $\gamma$ for which the optimal causal regularization is negative increases. As $\gamma$ approaches the interpolation threshold, it becomes increasingly hard for the optimal causal regularization to be negative. Recall that the optimal statistical regularization can be expressed in closed form as $\lambdaS = \SNRstat^{-1} \gamma$ for any $\gamma \in (0, \infty)$ and therefore the optimal statistical regularization is always positive while the optimal causal regularization can be negative!

When the causal SNR is smaller than the statistical SNR ($\genConf > 0$) and in particular under the ICM ($0 < \genConf \leq 1$), the optimal causal regularization is strictly positive and the benefit of explicit regularization does not vanish. This can indeed be the case even when the optimal statistical regularization vanishes. To see this consider the statistical risk in the highly underparameterized regime $\gamma \rightarrow 0$. In this regime, the benefit of explicit regularization vanishes and the min-norm interpolator indeed achieves the optimal \textit{statistical} risk. 
% Observe that the optimal statistical parameter is given in closed-form as $\lambdaS = \SNRstat^{-1}\gamma$ which is indeed $0$ when $n \gg d$. 
The optimal causal regularization in this regime is given explicitly by $\lambdaC =\genConf/(1 - \genConf)$ for $0 \leq \genConf \leq 1$ and $\lambdaC=\infty$ for $\genConf>1$.
% \begin{equation*}
%     \lambdaC = \begin{cases}
%             \frac{\genConf}{1 - \genConf}, &0 \leq \genConf \leq 1  \\
%             \infty, & \genConf > 1
%     \end{cases}\,,
% \end{equation*}
This is strictly positive and increasing in the confounding strength $\genConf$, and in fact diverges as $\genConf$ approaches $1$ (see Theorem~\ref{thm:conf_incr_reg}). 
\section{On Optimal Regularization}\label{sec:optimal_regularization}
% \TODO{Somewhere: argue that causal bias decreases further with $\lambda$, which is why causal regularization can be stronger than statistical}
% \begin{figure}
%     \centering
%     \includegraphics[width=.5\linewidth]{COLT22/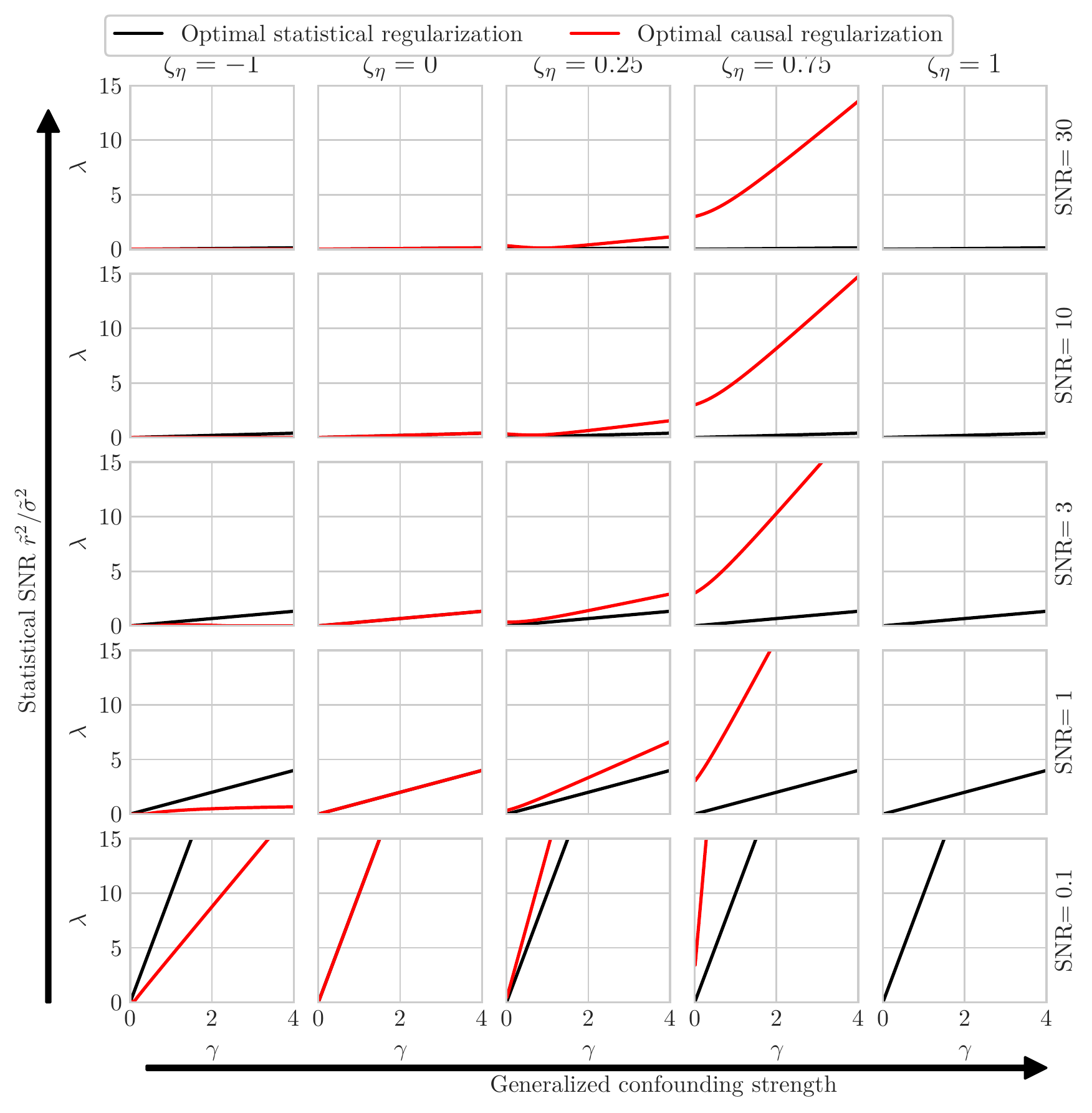}
%     \caption{
%     Each plots shows the optimal causal regularization $\lambdaC$ and optimal statistical regularization $\lambdaS$ against the overparameterization ratio $\gamma$. All plots in the same row have the same observational distribution with statistical signal-to-noise ratio increasing from bottom to top. The underlying causal models have different parameters such that their confounding strength increases from left to right. The optimal causal regularization is always larger than the optimal statistical regularization. This difference grows as the confounding strength increases and as the statistical signal-to-noise ratio decreases.
%     }
%     \label{fig:oracle_lam_grid}
% \end{figure}
% %
% \begin{figure}
%     \centering
%     \includegraphics[width=.5\linewidth]{COLT22/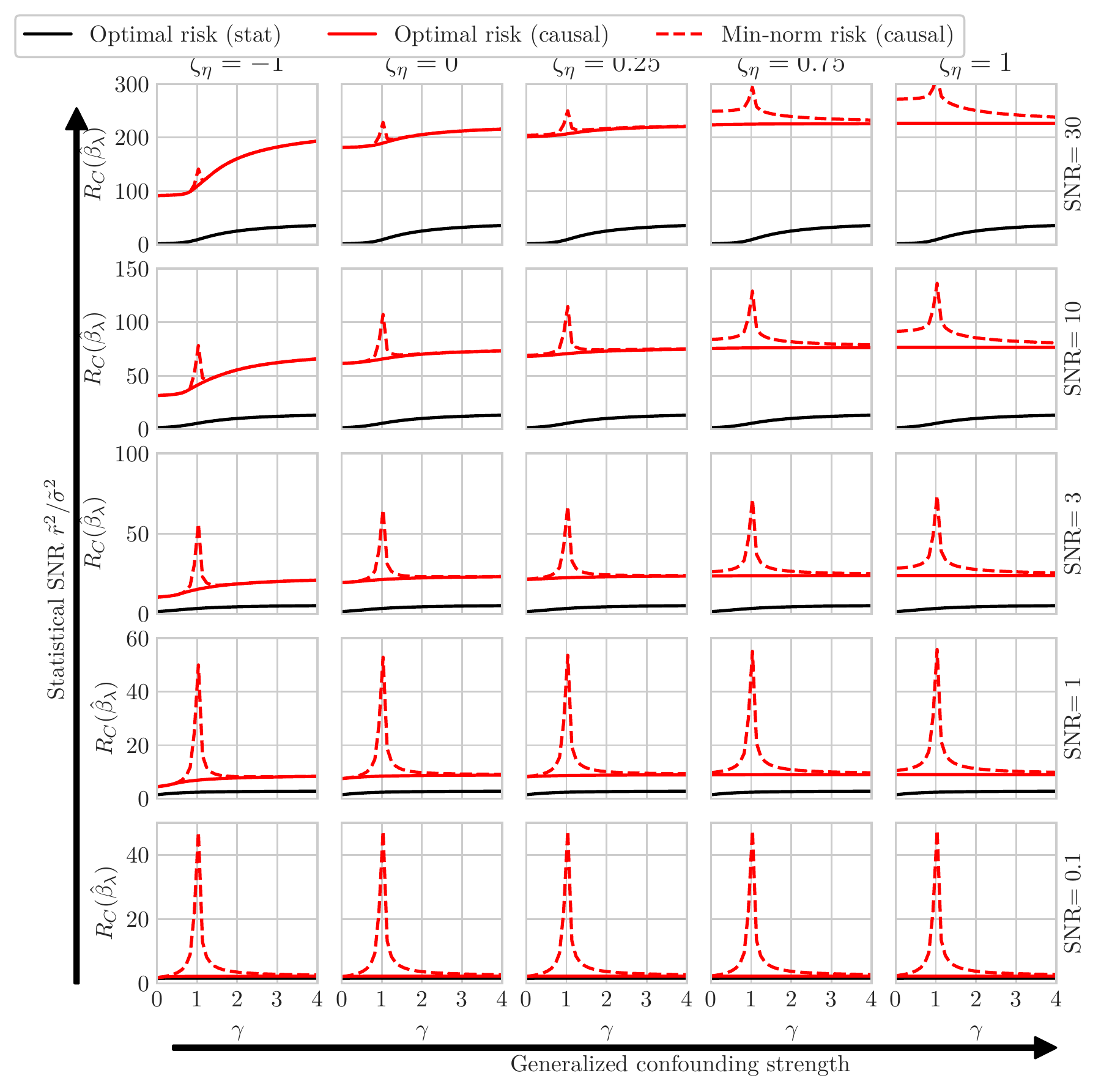}
%     \caption{
%     The risk curves corresponding to Figure~\ref{fig:oracle_lam_grid}.
%     }
%     \label{fig:oracle_risk_grid}
% \end{figure}
% The choice of the regularization parameter is clearly crucially important in learning good predictive models both for statistical as well for causal considerations. 
In this section, we investigate two key questions which are natural in the context of our work. How does the optimal causal regularization $\lambdaC$ compare to the optimal statistical regularization $\lambdaS$? What is the dependence of the optimal causal regularization $\lambdaC$ on the confounding strength $\genConf$?
\paragraph{Statistical vs.\ causal optimal regularization} 
% When the training and test distributions coincide, that is, for statistical generalization, approaches such as cross-validation or information criterion (for example, AIC or BIC) can be utilized to estimate the regularization parameter for optimal out-of-sample generalization. 
When the training and test distributions coincide, approaches such as cross-validation or information criteria (for example AIC or BIC) can be used to estimate the regularization parameter for optimal out-of-sample generalization. 
However, choosing the correct regularization parameter for causal learning can be challenging since we do not observe any data from the interventional distribution. 
% To elicit insights into the optimal choice of $\lambda$ for causal questions, a natural approach would be to compare it to the optimal choice of $\lambda$ in the statistical setting which can usually be estimated from data. 
To understand the optimal causal regularization, it is natural to compare it to the optimal statistical regularization, which can usually be estimated from data. Interestingly, our analysis reveals that when confounding strength is positive $\genConf > 0$ and in particular under the ICM one needs to regularize more strongly for causal generalization than for statistical generalization. This resolves a recent conjecture in \citet{Jan:2019} which suggests that one may generally need to regularize more strongly for causal learning than for statistical learning. However, when the confounding strength is negative, that is, when the causal signal dominates the statistical signal, the optimal causal regularization $\lambdaC$ can actually be smaller than the optimal statistical regularization $\lambdaS$ and as we saw earlier in Section \ref{sec:benign}, it can even be negative. We formally present this result in Theorem \ref{thm:strong_reg_caus_vs_stat_gen}. 
% \TODO{and a visualization of these findings can be seen in Figure X.}
\begin{restatable}[Optimal Statistical vs.\ Causal Regularization]{theorem}{GenStrongRegCausVsStat}
\label{thm:strong_reg_caus_vs_stat_gen}
For any causal model parameterized as in \eqref{eq:causal_model}, the condition $\genConf = 0$ defines a phase transition for the optimal regularization via
 \begin{equation*}
     \genConf < 0 \iff \lambdaC < \lambdaS, 
     \quad \quad  
     \genConf =0 \iff \lambdaC = \lambdaS, 
     \quad \textrm{and}\quad  
     \genConf > 0 \iff \lambdaC > \lambdaS.
 \end{equation*}
In particular under the ICM, the optimal causal regularization $\lambdaC$ is always strictly larger than the optimal statistical regularization $\lambdaS$, unless $\genConf = 0$, in which case they coincide. 
\end{restatable}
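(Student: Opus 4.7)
The plan is to differentiate the closed form for $\limRiskArgC{\lambda}$ given in Theorem~\ref{thm:causal_ridge} and evaluate the derivative at the known statistical optimum $\lambdaS=\sigmastat\gamma/\signalstat$. Applying $\tfrac{d}{d\lambda}m(-\lambda)=-m'(-\lambda)$ term by term and regrouping, the derivative factors cleanly as
\begin{equation*}
    \partial_\lambda \limRiskArgC{\lambda}\;=\;(\signalstat\,\lambda - \sigmastat\gamma)\,A(\lambda)\;-\;2(\omega^2+\eta)\,B(\lambda),
\end{equation*}
where $A(\lambda)\coloneqq 2m'(-\lambda)-\lambda m''(-\lambda)$ and $B(\lambda)\coloneqq m(-\lambda)-\lambda m'(-\lambda)$. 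Setting $\omega^2=\eta=0$ would recover the statistical derivative $\partial_\lambda\limRiskArgS{\lambda}$, consistent with its unique zero at $\lambdaS$ given in the excerpt.

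Next I would rewrite the derivative in terms of $\genConf$ via the identity $\omega^2+\eta=\scalprod{\Gamma}{\Gamma}+\scalprod{\Gamma}{\beta}=\scalprod{\Gamma}{\betaStat}=\genConf\,\signalstat$, so that any interior critical point $\lambda^{*}$ of $\limRiskArgC{\cdot}$ satisfies the fixed-point identity
\begin{equation*}
    \lambda^{*} - \lambdaS \;=\; 2\,\genConf\,\frac{B(\lambda^{*})}{A(\lambda^{*})}.
\end{equation*}
Using the Stieltjes-transform representation of $m(-\lambda)$ against the Marchenko--Pastur measure $\mu$ yields $A(\lambda)=\int 2x(x+\lambda)^{-3}\,d\mu(x)$ and $B(\lambda)=\int x(x+\lambda)^{-2}\,d\mu(x)$, both strictly positive on $(0,\infty)$ for every $\gamma\in(0,\infty)$. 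Hence the right-hand side of the identity shares its sign with $\genConf$, and in particular $\partial_\lambda\limRiskArgC{\lambda}\big|_{\lambdaS}=-2\genConf\,\signalstat\,B(\lambdaS)$ has the opposite sign to $\genConf$.

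From this local information the three equivalences should follow directly: when $\genConf=0$ the extra term vanishes, so the causal and statistical risks share the same unique minimizer and $\lambdaC=\lambdaS$; when $\genConf>0$ the risk $\limRiskArgC{\cdot}$ is strictly decreasing at $\lambdaS$, so the minimum lies to the right and $\lambdaC>\lambdaS$; and when $\genConf<0$ the derivative at $\lambdaS$ is strictly positive, so $\lambdaC<\lambdaS$, in line with Theorem~\ref{thm:negative_causal_reg} permitting $\lambdaC\leq 0$. The ICM consequence is immediate because ICM forces $0\leq\genConf\leq 1$, with $\genConf=0$ only in the degenerate unconfounded case.

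The principal obstacle is upgrading the local sign information at $\lambdaS$ to a statement about the \emph{global} minimizer, since the first-order condition reduces to a quartic and may in principle admit several positive roots. I would close this gap with the fixed-point identity itself, which forces every positive critical point to lie on the same side of $\lambdaS$ as the sign of $\genConf$; combined with the endpoint behaviour of $\limRiskArgC{\lambda}$ as $\lambda\to 0^{+}$ and $\lambda\to\infty$, this localizes $\lambdaC$ unambiguously with respect to $\lambdaS$ and yields the three biconditionals in the theorem.
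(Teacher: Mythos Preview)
Your proposal is correct and lands on the same central device as the paper: the first-order condition for $\lambdaC$ rearranges to $\lambdaC-\lambdaS$ equal to $\genConf$ times a strictly positive factor, so the sign of $\lambdaC-\lambdaS$ is the sign of $\genConf$. The paper obtains this by computing $\partial_\lambda\limRiskArgC{\lambda}$ with the explicit Marchenko--Pastur formula for $m$, arriving at
\[
\lambdaC-\lambdaS=\frac{\genConf}{2\gamma}\bigl(1+\lambdaC+\gamma-\sqrt{\varphi(\lambdaC)}\bigr)\varphi(\lambdaC),
\]
and reads off positivity of the bracket by inspection; the boundary cases $\lambdaC\in\{0,\infty\}$ are delegated to a separate regime lemma (Lemma~\ref{lem:optimal_causal_lambda_characterization}), which in turn rests on a monotonicity analysis of an auxiliary function (Lemma~\ref{lem:properties_func_f}). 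Your route differs in two pleasant ways. First, you establish $A(\lambda),B(\lambda)>0$ via the spectral integral representation $A(\lambda)=\int 2x(x+\lambda)^{-3}\,d\mu$ and $B(\lambda)=\int x(x+\lambda)^{-2}\,d\mu$, which is cleaner and covariance-agnostic compared to manipulating the closed form of $m$. Second, your global step---every interior critical point lies on the $\genConf$-side of $\lambdaS$, hence the derivative keeps a fixed sign on the other side of $\lambdaS$ by the intermediate value theorem---bypasses the paper's auxiliary lemmas entirely and handles $\lambdaC=\infty$ (no critical point, risk decreasing throughout) and $\lambdaC=0$ uniformly. What the paper's longer route buys is the explicit threshold $\genConf\le -\SNRstat^{-1}\gamma\max\{1,\gamma\}/(1-\gamma)^2$ for $\lambdaC=0$, which is reused in Theorems~\ref{thm:negative_causal_reg} and~\ref{thm:conf_incr_reg}; your argument proves the present theorem more directly but does not extract that threshold along the way.
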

% We can also understand this result by considering the limiting expressions for bias and variance in equations \ref{eq:limBiasCridge} and \ref{eq:limVarianceCridge} and comparing them to their statistical counterparts 
% \begin{restatable}[]{theorem}{StrongRegCausVsStat}
% \label{thm:strong_reg_caus_vs_stat}
%  For any causal model defined via the structural equations in (\ref{eq:causal_model}), under the assumption that $\eta = 0$, the regularization parameter that achieves the smallest causal risk is at least as large as the regularization parameter that achieves the best statistical risk, that is, 
% %  \begin{equation*}
% %      \lambda^*_C = \argmin \limits_{\lambda \in (0, \infty)} \riskCRidge \geq  \lambda^*_S = \argmin \limits_{\lambda \in (0, \infty)} \riskSRidge.
% %  \end{equation*}
% \begin{equation*}
%     \lambdaC\geq \lambdaS
% \end{equation*}
% \end{restatable}
% \begin{proof}\textbf{(sketch).}
% % \leena{Say that they have to coincide when $\structConf$ is zero.}
% % \leena{Mention that the non-asymptotic results are in excellent agreement with the limiting expressions}
% % \TODO{Can we show that this is the case for a larger range of $\eta$? This is related to the question of whether we can show theoretically, the transition point where $\lambda^*_C  = \lambda^*_S$ for a fixed $\gamma$. So take the derivative and substitute $\lambda^*_S$ and find condition under which this is zero. Then we can generalize this result}.
% \end{proof}

\paragraph{Dependence on confounding strength $\genConf$} The problem of causal learning from observational data is one of learning under distribution shift where the distribution of the training data is shifted from that of the test distribution. As discussed earlier in Proposition~\ref{prop:conf_strength_ordering}, the confounding strength measure $\genConf$ quantifies the strength of this distribution shift. 
% Therefore, it is reasonable to expect that the optimal strength of regularization would need to increase with increasing strength of confounding. 
Therefore, we expect that the optimal regularization needs to increase with confounding strength.
% Our result in Theorem \ref{thm:conf_incr_reg} indeed confirms this phenomenon which can also be visualized in Figure \TODO{Add the regularization figure here}.
Theorem~\ref{thm:conf_incr_reg} indeed confirms this intuition. 
% \TODO{Visualized in Figure X.}

\begin{restatable}[Increasing Confounding Strength Requires Stronger Regularization]{theorem}{ConfIncrReg}
\label{thm:conf_incr_reg}
Consider the family of causal models parameterized as in (\ref{eq:causal_model}) that entail the same observational distribution. 
After fixing $\gamma \in (0, \infty)$ the optimal causal regularization $\lambdaC$ only depends on the confounding strength $\genConf$ and $\lambdaC$ is an increasing function in $\genConf$. More specifically, we can distinguish the following regimes using $\rho(\gamma,\SNRstat)=-\SNRstat^{-1}\gamma \max{\{1,\gamma\}}/(1 - \gamma)^2$:
\begin{align*}
     \genConf\leq \rho(\gamma,\SNRstat) &\implies \lambdaC=0\,,\\
   \rho(\gamma,\SNRstat)<\genConf<1&\implies\lambdaC\in(0,\infty)\quad\textrm{with}\quad \partial_{\genConf}\lambdaC(\gamma) > 0\,, \\ 
 \textrm{and} \quad \genConf \geq 1 &\implies \lambdaC=\infty.
\end{align*}
% \begin{align*}
%      \genConf\leq \rho(\gamma,\SNRstat) &\implies \lambdaC=0, \quad 1\leq\genConf \implies \lambdaC=\infty. \\
%   \textrm{and} \quad  \rho(\gamma,\SNRstat)<\genConf<1&\implies\lambdaC\in(0,\infty); \; \partial_{\zeta}\lambdaC(\gamma) > 0.
% \end{align*}
% \begin{itemize}
%     \item $\genConf\leq \rho(\gamma,\SNRstat)\implies \lambdaC=0$ \quad and\quad $1\leq\genConf \implies \lambdaC=\infty$
%     \item $\rho(\gamma,\SNRstat)<\genConf<1\implies\lambdaC\in(0,\infty)$. Here, $\lambdaC$ is strictly increasing in $\genConf$.
% \end{itemize}
\end{restatable}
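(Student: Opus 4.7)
The plan is to exploit a structural observation about the limiting causal risk from Theorem~\ref{thm:causal_ridge}: when viewed as a function of $\lambda$ and $\genConf$ with the observational distribution fixed, it is \emph{affine in $\genConf$}. Substituting $\omega^2+\eta=\genConf\signalstat$ into the risk formula rewrites
\[
\limRiskArgC{\lambda}=A(\lambda)+\genConf\,B(\lambda)+2\omega^2+\sigmastat,
\]
where $A(\lambda)=\signalstat\lambda^2 m'(-\lambda)+\sigmastat\gamma\bigl(m(-\lambda)-\lambda m'(-\lambda)\bigr)$ and $B(\lambda)=-2\signalstat\lambda m(-\lambda)$. The only causal-model-specific quantity besides $\genConf$ that survives is $\omega^2$, and it enters only as an additive constant. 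This immediately justifies the opening assertion that $\lambdaC$ depends on the causal model only through $\genConf$ once $\gamma$ and the observational distribution (which fixes $\SNRstat$, $\sigmastat$, $\signalstat$) are held fixed.

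For the interior regime $\rho(\gamma,\SNRstat)<\genConf<1$, I would apply implicit differentiation to the first-order condition $A'(\lambdaC)+\genConf B'(\lambdaC)=0$, which yields
\[
\partial_{\genConf}\lambdaC=-\frac{B'(\lambdaC)}{\partial_\lambda^2\limRiskArgC{\lambda}\big|_{\lambda=\lambdaC}}.
\]
Using the Stieltjes-transform representation $m(-\lambda)=\int(x+\lambda)^{-1}\,d\mu(x)$ against the Marchenko--Pastur law gives
\(B'(\lambda)=-2\signalstat\int x/(x+\lambda)^2\,d\mu(x)<0,\)
while the denominator is nonnegative at a minimizer. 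Strict positivity of $\partial_{\genConf}\lambdaC$ then follows from strict convexity of $\limRiskArgC{\cdot}$ at $\lambdaC$, which can be read off an analogous integral representation of its second derivative (or, failing that, argued by the contrapositive: if $\lambdaC$ were not strictly monotone in $\genConf$, the first-order condition would fail for some $\genConf$).

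The boundary cases fall out cleanly. For $\genConf\leq\rho(\gamma,\SNRstat)$, substituting $\SNRcaus=(1-\genConf)\SNRstat$ translates the condition in Theorem~\ref{thm:negative_causal_reg} into $\genConf\leq-\SNRstat^{-1}\gamma\max\{1,\gamma\}/(1-\gamma)^2=\rho(\gamma,\SNRstat)$, giving $\lambdaC=0$ on $[0,\infty)$. For $\genConf\geq 1$ I would show $\limRiskArgC{\lambda}>\limRiskArgC{\infty}$ for every $\lambda>0$. The key algebraic identity $(x+\lambda)^2+\lambda^2-2\lambda(x+\lambda)=x^2$ rewrites the excess over the null risk at $\genConf=1$ as
\[
\signalstat\int\frac{x^2}{(x+\lambda)^2}\,d\mu(x)+\sigmastat\gamma\int\frac{x}{(x+\lambda)^2}\,d\mu(x),
\]
which is manifestly positive. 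For $\genConf>1$, the extra contribution $(\genConf-1)[B(\lambda)-B(\infty)]=2(\genConf-1)\signalstat\bigl(1-\lambda m(-\lambda)\bigr)>0$ (using $\lambda m(-\lambda)=\int(1+x/\lambda)^{-1}\,d\mu<1$) only reinforces the inequality. Hence the infimum over $\lambda\in(0,\infty)$ is attained only in the limit, i.e., $\lambdaC=\infty$.

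The main hurdle I anticipate is the $\genConf\geq 1$ boundary: the strict inequality $\limRiskArgC{\lambda}>\limRiskArgC{\infty}$ is not transparent from the raw risk formula, and the proof hinges on spotting the rearrangement above that turns the excess risk into a sum of two manifestly nonnegative Marchenko--Pastur integrals. Once this identity is in hand, the remaining steps are routine manipulations of the Stieltjes transform together with standard implicit-function-theorem reasoning, and the three regimes of $\genConf$ assemble into the stated monotone dependence of $\lambdaC$ on $\genConf$.
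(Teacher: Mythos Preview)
Your overall strategy---rewriting the limiting risk as affine in $\genConf$ and applying implicit differentiation to the first-order condition---is the same as the paper's. The paper likewise differentiates the critical-point equation in $\genConf$ and checks the signs of the two partials. The presentations differ mainly in technique: you use the integral representation $m(-\lambda)=\int(x+\lambda)^{-1}\,d\mu(x)$ against the Marchenko--Pastur law, whereas the paper works throughout with the explicit closed form of $m$. Your treatment of the regime $\genConf\geq 1$ (showing $\limRiskArgC{\lambda}>\limRiskArgC{\infty}$ via the identity $\lambda^2-2\lambda(x+\lambda)+(x+\lambda)^2=x^2$) is a clean alternative to the paper's route of showing $\partial_\lambda\limRiskArgC{\lambda}\leq 0$ on all of $(0,\infty)$.

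There are, however, two genuine gaps. First, invoking Theorem~\ref{thm:negative_causal_reg} for the regime $\genConf\leq\rho(\gamma,\SNRstat)$ is circular: in the paper, Theorem~\ref{thm:negative_causal_reg} is deduced \emph{from} the present theorem, not conversely. You need to analyze the sign of $\partial_\lambda\limRiskArgC{\lambda}$ directly; the paper does this in a preparatory lemma showing that the risk is increasing on all of $(0,\infty)$ precisely when $\genConf\leq\rho$.

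Second, and more substantially, the strict positivity of $\partial_\lambda^2\limRiskArgC{\lambda}$ at $\lambdaC$ cannot simply be ``read off'' the integral representation. Expanding gives
\begin{align*}
\partial_\lambda^2\limRiskArgC{\lambda}
=\int\left[\,2\signalstat\,\frac{x(x-2\lambda)}{(x+\lambda)^4}
+6\sigmastat\gamma\,\frac{x}{(x+\lambda)^4}
+4\genConf\signalstat\,\frac{x}{(x+\lambda)^3}\right]d\mu(x)\,,
\end{align*}
whose integrand is not sign-definite: the first summand is negative on $\{x<2\lambda\}$, and for $\genConf<0$ (which lies in the interior regime since $\rho<0$) the last summand is negative as well. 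Your contrapositive sketch does not supply this either. This positivity is exactly the technical core of the paper's proof: after substituting the critical-point condition back into the relevant partial derivative, the paper bounds $\sqrt{(1+\lambda+\gamma)^2-4\gamma}$ from below---separately for $\gamma\leq 1$ and $\gamma>1$---to force the sign. Without this step, neither the implicit-function computation nor even the uniqueness of the minimizer $\lambdaC$ is justified.
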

\section{Discussion}
Causal learning from observational data is an extremely challenging problem because of the non-identifiability induced by hidden confounding. Typical approaches to to dealing with this non-identifiability often rely on additional information, for example observing exogenous \citep{rothenhausler2021anchor} or instrumental variables \citep{angrist1991does}, or make additional assumptions, for example no hidden confounding. When no additional information is provided, other approaches instead make certain assumptions on the underlying model. Our work is more aligned with approaches of the latter kind.
% When no additional information is available, fitting to the data or interpolation is a natural baseline.
% Typical approaches to dealing with non-identifiability rely on having access to information other than observations of the covariates and the target variable. 

Our results demonstrate that our measure of confounding strength determines the sign and the strength of regularization for optimal causal generalization. Therefore estimation of confounding strength is a crucially important problem. Under the ICM assumption, \citet{Jan:2017, Jan:2018} provide a method of estimating the confounding strength of the underlying causal model assuming a linear model in high dimensions. Given a measure of confounding strength, one can then directly optimize the causal risk to obtain the regularization parameter that is optimal for causal generalization. In this work, we focus primarily on an exhaustive treatment of causal generalization under our model. Investigating approaches for estimating confounding strength is beyond the scope of the current work.

One could further consider generalizing the assumptions we make in the paper: arbitrary covariances, shifts in the marginal distributions of covariates under interventions, more complex hypothesis classes or non-linear causal relationships. Since our simple linear model already exhibits rich behavior, we focus in this paper on thoroughly understanding the simple setting and leave such extensions for future work.

% As we show in Proposition \ref{prop:conf_strength_ordering}, under the ICM assumption, the measure of confounding strength uniquely identifies the causal model from the statistical model.

% \subfile{theory_files/general_confounding_measure.tex}
% \subfile{theory_files/ordering_causal_models.tex}
% \subfile{theory_files/generalization_gap.tex}
% \subfile{theory_files/optimal_stat_vs_caus_regularization.tex}
% \subfile{theory_files/optimal_regularization_negative.tex}

% Acknowledgments---Will not appear in anonymized version
\acks{This work has been supported by the German Federal Ministry of Education and Research (BMBF): Tübingen AI Center, FKZ: 01IS18039A, the German Research
Foundation through the Cluster of Excellence
“Machine Learning – New Perspectives for Science" (EXC 2064/1 number 390727645), and the Baden-W{\"u}rttemberg Stiftung (Eliteprogram for Postdocs project ``Clustering large evolving networks'').
The authors thank the International Max Planck Research School for Intelligent Systems (IMPRS-IS) for supporting Leena Chennuru Vankadara and Luca Rendsburg.}
\bibliography{bibliography}

\begin{thebibliography}{41}
\providecommand{\natexlab}[1]{#1}
\providecommand{\url}[1]{\texttt{#1}}
\expandafter\ifx\csname urlstyle\endcsname\relax
  \providecommand{\doi}[1]{doi: #1}\else
  \providecommand{\doi}{doi: \begingroup \urlstyle{rm}\Url}\fi

\bibitem[Angrist and Keueger(1991)]{angrist1991does}
Joshua~D Angrist and Alan~B Keueger.
\newblock Does compulsory school attendance affect schooling and earnings?
\newblock \emph{The Quarterly Journal of Economics}, 106\penalty0 (4):\penalty0
  979--1014, 1991.

\bibitem[Bai and Silverstein(2010)]{bai2010spectral}
Zhidong Bai and Jack~W Silverstein.
\newblock \emph{Spectral analysis of large dimensional random matrices},
  volume~20.
\newblock Springer, 2010.

\bibitem[Bartlett et~al.(2020)Bartlett, Long, Lugosi, and
  Tsigler]{bartlett2020benign}
Peter~L Bartlett, Philip~M Long, G{\'a}bor Lugosi, and Alexander Tsigler.
\newblock Benign overfitting in linear regression.
\newblock \emph{Proceedings of the National Academy of Sciences}, 117\penalty0
  (48):\penalty0 30063--30070, 2020.

\bibitem[Belkin(2021)]{belkin2021fit}
Mikhail Belkin.
\newblock Fit without fear: remarkable mathematical phenomena of deep learning
  through the prism of interpolation.
\newblock \emph{Acta Numerica}, 30:\penalty0 203--248, 2021.

\bibitem[Belkin et~al.(2018)Belkin, Ma, and Mandal]{belkin2018understand}
Mikhail Belkin, Siyuan Ma, and Soumik Mandal.
\newblock To understand deep learning we need to understand kernel learning.
\newblock In \emph{International Conference on Machine Learning}, pages
  541--549. PMLR, 2018.

\bibitem[Belkin et~al.(2019{\natexlab{a}})Belkin, Hsu, Ma, and
  Mandal]{belkin2019reconciling}
Mikhail Belkin, Daniel Hsu, Siyuan Ma, and Soumik Mandal.
\newblock Reconciling modern machine-learning practice and the classical
  bias--variance trade-off.
\newblock \emph{Proceedings of the National Academy of Sciences}, 116\penalty0
  (32):\penalty0 15849--15854, 2019{\natexlab{a}}.

\bibitem[Belkin et~al.(2019{\natexlab{b}})Belkin, Rakhlin, and
  Tsybakov]{belkin2019does}
Mikhail Belkin, Alexander Rakhlin, and Alexandre~B Tsybakov.
\newblock Does data interpolation contradict statistical optimality?
\newblock In \emph{The 22nd International Conference on Artificial Intelligence
  and Statistics}, pages 1611--1619. PMLR, 2019{\natexlab{b}}.

\bibitem[Blanchet et~al.(2019)Blanchet, Kang, and Murthy]{blanchet2019robust}
Jose Blanchet, Yang Kang, and Karthyek Murthy.
\newblock Robust wasserstein profile inference and applications to machine
  learning.
\newblock \emph{Journal of Applied Probability}, 56\penalty0 (3):\penalty0
  830--857, 2019.

\bibitem[Byrd and Lipton(2019)]{byrd2019effect}
Jonathon Byrd and Zachary Lipton.
\newblock What is the effect of importance weighting in deep learning?
\newblock In \emph{International Conference on Machine Learning}, pages
  872--881. PMLR, 2019.

\bibitem[Dicker(2016)]{dicker2016ridge}
Lee~H Dicker.
\newblock Ridge regression and asymptotic minimax estimation over spheres of
  growing dimension.
\newblock \emph{Bernoulli}, 22\penalty0 (1):\penalty0 1--37, 2016.

\bibitem[Dobriban and Wager(2018)]{Dob:2018}
Edgar Dobriban and Stefan Wager.
\newblock High-dimensional asymptotics of prediction: Ridge regression and
  classification.
\newblock \emph{The Annals of Statistics}, 46\penalty0 (1):\penalty0 247--279,
  2018.

\bibitem[Donhauser et~al.(2021)Donhauser, Tifrea, Aerni, Heckel, and
  Yang]{donhauser2021interpolation}
Konstantin Donhauser, Alexandru Tifrea, Michael Aerni, Reinhard Heckel, and
  Fanny Yang.
\newblock Interpolation can hurt robust generalization even when there is no
  noise.
\newblock \emph{Advances in Neural Information Processing Systems}, 34, 2021.

\bibitem[Feldman(2020)]{feldman2020does}
Vitaly Feldman.
\newblock Does learning require memorization? a short tale about a long tail.
\newblock In \emph{Proceedings of the 52nd Annual ACM SIGACT Symposium on
  Theory of Computing}, pages 954--959, 2020.

\bibitem[Gao et~al.(2017)Gao, Chen, and Kleywegt]{gao2017distributional}
Rui Gao, Xi~Chen, and Anton~J Kleywegt.
\newblock Distributional robustness and regularization in statistical learning.
\newblock \emph{arXiv preprint arXiv:1712.06050}, 2017.

\bibitem[Gulrajani and Lopez-Paz(2021)]{gulrajani2021in}
Ishaan Gulrajani and David Lopez-Paz.
\newblock In search of lost domain generalization.
\newblock In \emph{International Conference on Learning Representations}, 2021.
\newblock URL \url{https://openreview.net/forum?id=lQdXeXDoWtI}.

\bibitem[Hachem et~al.(2007)Hachem, Loubaton, and
  Najim]{hachem2007deterministic}
Walid Hachem, Philippe Loubaton, and Jamal Najim.
\newblock Deterministic equivalents for certain functionals of large random
  matrices.
\newblock \emph{The Annals of Applied Probability}, 17\penalty0 (3):\penalty0
  875--930, 2007.

\bibitem[Hastie et~al.(2019)Hastie, Montanari, Rosset, and
  Tibshirani]{Has:2019}
Trevor Hastie, Andrea Montanari, Saharon Rosset, and Ryan~J Tibshirani.
\newblock Surprises in high-dimensional ridgeless least squares interpolation.
\newblock \emph{arXiv preprint arXiv:1903.08560}, 2019.

\bibitem[Janzing and Sch{\"o}lkopf(2017)]{Jan:2017}
D.~Janzing and B.~Sch{\"o}lkopf.
\newblock Detecting confounding in multivariate linear models via spectral
  analysis.
\newblock \emph{Journal of Causal Inference}, 6\penalty0 (1), 2017.

\bibitem[Janzing(2019)]{Jan:2019}
Dominik Janzing.
\newblock Causal regularization.
\newblock In \emph{Advances in Neural Information Processing Systems
  (NeurIPS)}, volume~32. Curran Associates, Inc., 2019.

\bibitem[Janzing and Sch\"{o}lkopf(2010)]{Jan:2010}
Dominik Janzing and Bernhard Sch\"{o}lkopf.
\newblock Causal inference using the algorithmic markov condition.
\newblock \emph{IEEE Transactions on Information Theory}, 56\penalty0 (10),
  2010.

\bibitem[Janzing and Sch{\"o}lkopf(2018)]{Jan:2018}
Dominik Janzing and Bernhard Sch{\"o}lkopf.
\newblock Detecting non-causal artifacts in multivariate linear regression
  models.
\newblock In \emph{Proceedings of the 35th International Conference on Machine
  Learning (ICML)}, volume~80 of \emph{Proceedings of Machine Learning Research
  (PMLR)}, pages 2245--2253, 2018.

\bibitem[Kobak et~al.(2020)Kobak, Lomond, and Sanchez]{kobak2020optimal}
Dmitry Kobak, Jonathan Lomond, and Benoit Sanchez.
\newblock The optimal ridge penalty for real-world high-dimensional data can be
  zero or negative due to the implicit ridge regularization.
\newblock \emph{J. Mach. Learn. Res.}, 21:\penalty0 169--1, 2020.

\bibitem[Kuhn et~al.(2019)Kuhn, Esfahani, Nguyen, and
  Shafieezadeh-Abadeh]{kuhn2019wasserstein}
Daniel Kuhn, Peyman~Mohajerin Esfahani, Viet~Anh Nguyen, and Soroosh
  Shafieezadeh-Abadeh.
\newblock Wasserstein distributionally robust optimization: Theory and
  applications in machine learning.
\newblock In \emph{Operations research \& management science in the age of
  analytics}, pages 130--166. Informs, 2019.

\bibitem[Lemeire and Janzing(2013)]{Lem:2012}
Jan Lemeire and Dominik Janzing.
\newblock Replacing causal faithfulness with algorithmic independence of
  conditionals.
\newblock \emph{Minds and Machines}, 23\penalty0 (2), 2013.

\bibitem[Liang and Rakhlin(2020)]{liang2020just}
Tengyuan Liang and Alexander Rakhlin.
\newblock Just interpolate: Kernel “ridgeless” regression can generalize.
\newblock \emph{The Annals of Statistics}, 48\penalty0 (3):\penalty0
  1329--1347, 2020.

\bibitem[Mar{\v{c}}enko and Pastur(1967)]{marvcenko1967distribution}
Vladimir~A Mar{\v{c}}enko and Leonid~Andreevich Pastur.
\newblock Distribution of eigenvalues for some sets of random matrices.
\newblock \emph{Mathematics of the USSR-Sbornik}, 1\penalty0 (4):\penalty0 457,
  1967.

\bibitem[Mei and Montanari(2019)]{Mei:2019}
Song Mei and Andrea Montanari.
\newblock The generalization error of random features regression: Precise
  asymptotics and double descent curve.
\newblock \emph{arXiv preprint arXiv:1908.05355}, 2019.

\bibitem[Muthukumar et~al.(2020)Muthukumar, Vodrahalli, Subramanian, and
  Sahai]{muthukumar2020harmless}
Vidya Muthukumar, Kailas Vodrahalli, Vignesh Subramanian, and Anant Sahai.
\newblock Harmless interpolation of noisy data in regression.
\newblock \emph{IEEE Journal on Selected Areas in Information Theory},
  1\penalty0 (1):\penalty0 67--83, 2020.

\bibitem[Pearl(2009)]{pearl2009causal}
Judea Pearl.
\newblock Causal inference in statistics: An overview.
\newblock \emph{Statistics surveys}, 3:\penalty0 96--146, 2009.

\bibitem[Peters et~al.(2017)Peters, Janzing, and Sch{\"o}lkopf]{Pet:2017}
Jonas Peters, Dominik Janzing, and Bernhard Sch{\"o}lkopf.
\newblock \emph{Elements of causal inference: foundations and learning
  algorithms}.
\newblock The MIT Press, 2017.

\bibitem[Rothenh{\"a}usler et~al.(2021)Rothenh{\"a}usler, Meinshausen,
  B{\"u}hlmann, and Peters]{rothenhausler2021anchor}
Dominik Rothenh{\"a}usler, Nicolai Meinshausen, Peter B{\"u}hlmann, and Jonas
  Peters.
\newblock Anchor regression: Heterogeneous data meet causality.
\newblock \emph{Journal of the Royal Statistical Society: Series B (Statistical
  Methodology)}, 83\penalty0 (2):\penalty0 215--246, 2021.

\bibitem[Rubio and Mestre(2011)]{Rub:2011}
Francisco Rubio and Xavier Mestre.
\newblock Spectral convergence for a general class of random matrices.
\newblock \emph{Statistics \& probability letters}, 81\penalty0 (5):\penalty0
  592--602, 2011.

\bibitem[Sagawa et~al.(2020)Sagawa, Raghunathan, Koh, and
  Liang]{sagawa2020investigation}
Shiori Sagawa, Aditi Raghunathan, Pang~Wei Koh, and Percy Liang.
\newblock An investigation of why overparameterization exacerbates spurious
  correlations.
\newblock In \emph{International Conference on Machine Learning}, pages
  8346--8356. PMLR, 2020.

\bibitem[Shafieezadeh~Abadeh et~al.(2015)Shafieezadeh~Abadeh,
  Mohajerin~Esfahani, and Kuhn]{shafieezadeh2015distributionally}
Soroosh Shafieezadeh~Abadeh, Peyman~M Mohajerin~Esfahani, and Daniel Kuhn.
\newblock Distributionally robust logistic regression.
\newblock \emph{Advances in Neural Information Processing Systems}, 28, 2015.

\bibitem[Shafieezadeh-Abadeh et~al.(2019)Shafieezadeh-Abadeh, Kuhn, and
  Esfahani]{shafieezadeh2019regularization}
Soroosh Shafieezadeh-Abadeh, Daniel Kuhn, and Peyman~Mohajerin Esfahani.
\newblock Regularization via mass transportation.
\newblock \emph{Journal of Machine Learning Research}, 20\penalty0
  (103):\penalty0 1--68, 2019.

\bibitem[Silverstein(1995)]{silverstein1995strong}
Jack~W Silverstein.
\newblock Strong convergence of the empirical distribution of eigenvalues of
  large dimensional random matrices.
\newblock \emph{Journal of Multivariate Analysis}, 55\penalty0 (2):\penalty0
  331--339, 1995.

\bibitem[Tsigler and Bartlett(2020)]{tsigler2020benign}
Alexander Tsigler and Peter~L Bartlett.
\newblock Benign overfitting in ridge regression.
\newblock \emph{arXiv preprint arXiv:2009.14286}, 2020.

\bibitem[Vankadara et~al.(2021)Vankadara, Faller, Minorics, Ghoshdastidar, and
  Janzing]{vankadara2021causal}
Leena~Chennuru Vankadara, Philipp~Michael Faller, Lenon Minorics, Debarghya
  Ghoshdastidar, and Dominik Janzing.
\newblock Causal forecasting: Generalization bounds for autoregressive models.
\newblock \emph{arXiv preprint arXiv:2111.09831}, 2021.

\bibitem[Xu et~al.(2009)Xu, Caramanis, and Mannor]{xu2009robustness}
Huan Xu, Constantine Caramanis, and Shie Mannor.
\newblock Robustness and regularization of support vector machines.
\newblock \emph{Journal of machine learning research}, 10\penalty0 (7), 2009.

\bibitem[Zhang et~al.(2021)Zhang, Bengio, Hardt, Recht, and
  Vinyals]{zhang2021understanding}
Chiyuan Zhang, Samy Bengio, Moritz Hardt, Benjamin Recht, and Oriol Vinyals.
\newblock Understanding deep learning (still) requires rethinking
  generalization.
\newblock \emph{Communications of the ACM}, 64\penalty0 (3):\penalty0 107--115,
  2021.

\bibitem[Zhu et~al.(2020)Zhu, Xie, Zhang, Gao, and
  Xie]{zhu2020distributionally}
Shixiang Zhu, Liyan Xie, Minghe Zhang, Rui Gao, and Yao Xie.
\newblock Distributionally robust weighted $ k $-nearest neighbors.
\newblock \emph{arXiv preprint arXiv:2006.04004}, 2020.

\end{thebibliography}
\appendix

\section{Proof of Proposition~\ref{prop:risk_formulas}}\label{app:risk_formulas}
For the statistical risk, we first need one standard result about the distribution of a multivariate normal random variable conditioned on an affine function:
\begin{lemma}\label{lem:normal_conditional}
Consider a multivariate normal random variable $X\sim\Gauss{\mu}{\Sigma}$ with mean $\mu\in\R^d$ and covariance $\Sigma\in\R^{d\times d}$. Then for any $A\in\R^{k\times d}$, $b\in\R^k$, and $y\in\R^k$ it holds
\begin{align*}
    X|(AX+b)=y\sim\Gauss{\mu+\Sigma A^T(A\Sigma A^T)^+(y-A\mu-b)}{\Sigma-\Sigma A^T(A\Sigma A^T)^+A\Sigma}\,.
\end{align*}
In particular, if $X$ is a standard normal random variable  ($\Sigma=I_d$, $\mu=0$) and $b=0$, it is
\begin{align*}
    X|AX=y\sim\Gauss{A^T(AA^T)^+y}{I_d-A^T(AA^T)^+A}
\end{align*}
\end{lemma}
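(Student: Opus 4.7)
The plan is to decompose $X$ as an affine function of $AX+b$ plus an independent Gaussian piece, and read the conditional law off directly. The only real work lies in justifying this decomposition in the degenerate case, where $A\Sigma A^T$ is singular and one must carefully wield the Moore-Penrose inverse.

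Every affine image of a Gaussian is Gaussian, so $(X,AX+b)$ is jointly Gaussian and the conditional distribution of $X$ given $AX+b=y$ is itself Gaussian; it therefore suffices to identify its mean and covariance. I would introduce
\begin{align*}
Y \coloneqq X - \mu - \Sigma A^T(A\Sigma A^T)^+ A(X-\mu),
\end{align*}
so that $X = Y + \mu + \Sigma A^T(A\Sigma A^T)^+\bigl((AX+b)-A\mu-b\bigr)$ holds identically. The key claim is that $Y$ is independent of $AX+b$, which by joint Gaussianity reduces to showing
\begin{align*}
\operatorname{Cov}(Y,AX+b) \;=\; \Sigma A^T - \Sigma A^T(A\Sigma A^T)^+(A\Sigma A^T) \;=\; 0.
\end{align*}
Assuming this vanishes, conditioning on $AX+b=y$ immediately yields the claimed Gaussian: the mean is $\mu+\Sigma A^T(A\Sigma A^T)^+(y-A\mu-b)$ (using $\mathbb{E}[Y]=0$), and the covariance equals $\operatorname{Var}(Y) = B\Sigma B^T$ with $B = I - \Sigma A^T(A\Sigma A^T)^+ A$. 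A short expansion, combined with the standard Moore-Penrose identity $(A\Sigma A^T)^+(A\Sigma A^T)(A\Sigma A^T)^+ = (A\Sigma A^T)^+$, collapses the double cross-terms to the advertised $\Sigma - \Sigma A^T(A\Sigma A^T)^+ A\Sigma$. The special case at the end of the lemma is then immediate by substituting $\mu=0$, $\Sigma=I_d$, $b=0$.

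The main obstacle is the algebraic identity $\Sigma A^T = \Sigma A^T(A\Sigma A^T)^+(A\Sigma A^T)$, which underlies both the vanishing of $\operatorname{Cov}(Y,AX+b)$ and the collapse of $\operatorname{Var}(Y)$. In the non-singular case it is obvious; in general I would derive it from the nullspace inclusion $\ker(A\Sigma A^T)\subseteq\ker(\Sigma A^T)$. To see the inclusion, note that for $v\in\ker(A\Sigma A^T)$ one has $\lVert\Sigma^{1/2}A^T v\rVert^2 = v^T A\Sigma A^T v = 0$, so $\Sigma^{1/2}A^T v=0$ and thus $\Sigma A^T v=0$. This single observation treats the degenerate and non-degenerate cases uniformly and closes out the plan.
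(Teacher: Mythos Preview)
Your proof is correct and takes a genuinely different route from the paper's. The paper simply writes $(X,AX+b)$ as an affine image of $X$, reads off the joint Gaussian block mean and covariance, and then invokes the standard conditional formula for multivariate normals (stated with the Moore--Penrose inverse). You instead build the explicit decomposition $X = Y + \mu + \Sigma A^T(A\Sigma A^T)^+\bigl((AX+b)-A\mu-b\bigr)$ with $Y$ independent of $AX+b$, and read the conditional law off directly. Your argument is more self-contained: it effectively re-derives the conditional formula rather than citing it, and your nullspace argument $\ker(A\Sigma A^T)\subseteq\ker(\Sigma A^T)$ makes the role of the pseudo-inverse in the degenerate case fully explicit, whereas the paper leaves that justification inside the quoted formula. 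The paper's version is shorter; yours is more transparent on exactly the point (singular $A\Sigma A^T$) you took care to address.
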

\begin{proof}
Let $Y=AX+b$. The joint distribution of $X$ and $Y$ is again a multivariate normal, because it can be written as an affine transformation of $X$:
\begin{align*}
    \begin{pmatrix}X \\ Y\end{pmatrix} = 
    \underbrace{\begin{pmatrix}I_d \\ A\end{pmatrix}}_{\eqqcolon A^\prime\in\R^{(d+k)\times d}} X + \underbrace{\begin{pmatrix}0_d \\ b\end{pmatrix}}_{\eqqcolon b^\prime\in\R^{d+k}} 
    = A^\prime X + b^\prime\,,
\end{align*}
which implies that
\begin{align*}
    \begin{pmatrix}X \\ Y\end{pmatrix} = A^\prime X + b^\prime
    \sim \Gauss{A^\prime \mu+b^\prime}{A^\prime \Sigma (A^\prime)^T}
    = \Gauss{
    \begin{pmatrix}
    \mu \\ A\mu +b
    \end{pmatrix}
    }{
    \begin{pmatrix}
    \Sigma & \Sigma A^T \\
    A \Sigma & A \Sigma A^T
    \end{pmatrix}
    }\,.
\end{align*}
The claim then follows from the standard formula for conditionals of multivariate normal distributions, which states that if $\begin{pmatrix}
Z_1 \\ Z_2
\end{pmatrix}\sim\Gauss{\begin{pmatrix}\mu_1 \\ \mu_2\end{pmatrix}}{\begin{pmatrix}
\Sigma_{1,1} & \Sigma_{1,2} \\
\Sigma_{2,1} & \Sigma_{2,2}
\end{pmatrix}}$, then
\begin{align*}
    Z_1|Z_2=z \sim\Gauss{\mu_1+\Sigma_{1,2}\Sigma_{2,2}^+(z-\mu_2)}{\Sigma_{1,1}-\Sigma_{1,2}\Sigma_{2,2}^+\Sigma_{2,1}}\,.
\end{align*}
\end{proof}

\RiskFormulas*
\begin{proof}
The key step for this proof is to characterize the distribution of $y$ under the $do$-intervention $y|do(x)$ and the usual observational conditional $y|x$.
We start with the proof for the causal risk under the $do$-intervention.
Intervening on $x$ under the causal model given by Eq.~\eqref{eq:causal_model} corresponds to removing all arrows to $x$, which corresponds to the structural equations
\begin{align*}
    z\sim\Gauss{0}{I_l}\,, \quad
    \varepsilon &\sim \Gauss{0}{\sigmacaus}\,,\quad
    y= x^T\beta + z^T\alpha + \varepsilon\,.
\end{align*}
In this model, $z$ acts as additional independent noise on $y$ through $z^T\alpha\sim\Gauss{0}{\lnorm{\alpha}^2}$, which implies that $y|do(x)\sim\Gauss{x^T\beta}{\lnorm{\alpha}^2+\sigmacaus}$. Equivalently, $y|do(x)$ has the same distribution as $x^T\beta + \varepsilon^\prime$ with $\varepsilon^\prime\sim\Gauss{0}{\sigmastat+\omega^2}$ because $\lnorm{\alpha}^2+\sigmacaus=\sigmastat+\omega^2$. This lets us compute the causal risk of a linear predictor $\betaGen\in\R^d$ as
\begin{align*}
    \riskCtotal(\betaGen) 
    &= \expec_{x}\expec_{y_0|do(x)}\left(x^T\betaGen-y\right)^2\\
    &=\expec_{x}\expec_{\varepsilon^\prime}\left(x^T\left(\betaGen-\beta\right)-\varepsilon^\prime\right)^2\\
    &=\expec_{x}\left(x^T\left(\betaGen-\beta\right)\right)^2-2\expec_{x}\bigg[x^T\left(\betaGen-\beta\right)\underbrace{\expec_{\varepsilon^\prime}\varepsilon^\prime}_{=0}\bigg]+\expec_{x}\expec_{\varepsilon^\prime}\left(\varepsilon^\prime\right)^2\\
    &=\lnorm{\betaGen-\beta}_\Sigma^2 + \sigmastat+\omega^2\,, \tag{$\expec_x xx^T=\Sigma$}
\end{align*}
% \begin{align*}
%     \riskCtotal(\betaGen) 
%     &= \expec_{x_0}\left[\expec_{y_0|do(x=x_0)}\left[\left(x_0^T\betaGen-y_0\right)^2\right]\right]\\
%     &=\expec_{x_0}\left[\expec_{\varepsilon^\prime}\left[\left(x_0^T\left(\betaGen-\beta\right)-\varepsilon^\prime\right)^2\right]\right]\\
%     &=\expec_{x_0}\left[\left(x_0^T\left(\betaGen-\beta\right)\right)^2\right]-2\expec_{x_0}\bigg[x_0^T\left(\betaGen-\beta\right)\underbrace{\expec_{\varepsilon^\prime}\varepsilon^\prime}_{=0}\bigg]+\expec_{x_0}\expec_{\varepsilon^\prime}\left(\varepsilon^\prime\right)^2\\
%     &=\lnorm{\betaGen-\beta}_\Sigma^2 + \sigmastat+\omega^2\,, \tag{$\expec_{x_0}x_0x_0^T=\Sigma$}
% \end{align*}
which proves the claim for the causal risk. 
The proof for the statistical risk is analogous once we have characterized the conditional distribution $y|x$ under the causal model. Recall that $\Sigma=MM^T$, $\Gamma=\Sigma^+M\alpha$, and $\omega^2=\lnorm{\Gamma}_\Sigma^2$. We first observe that $x=Mz$ is a linear map of the Gaussian distribution $z\sim\Gauss{0}{I_l}$, for which Lemma~\ref{lem:normal_conditional} yields
\begin{align*}
    z|x&\sim\Gauss{M^T(MM^T)^+x}{I-M^T(MM^T)^+M}\\
    \text{and therefore}\quad z^T\alpha|x &\sim\Gauss{\alpha^TM^T(MM^T)x}{\lnorm{\alpha}^2-\alpha^TM^T(MM^T)^+M\alpha}\\
    &=\Gauss{x^T\Gamma}{\lnorm{\alpha}^2-\norm{\Gamma}^2_\Sigma}\,,
\end{align*}
where the last equality used the property of the pseudo-inverse
\begin{align*}
    \alpha^TM^T(MM^T)^+M\alpha
    =\alpha^TM^T\Sigma^+M\alpha
    =\alpha^TM^T\Sigma^+\Sigma\Sigma^+M\alpha
    =\Gamma^T\Sigma\Gamma
    =\lnorm{\Gamma}_\Sigma^2
    =\omega^2\,.
\end{align*}
Since $y=x^T\beta+z^T\alpha+\varepsilon$, it follows that
\begin{align*}
    y|x\sim\Gauss{x^T(\beta+\Gamma)}{\sigmacaus+\lnorm{\alpha}^2-\omega^2}
    =\Gauss{x^T\betaStat}{\sigmastat}\,,
\end{align*}
which concludes the proof.
\end{proof}

\section{Proofs for Section~\ref{subsec:main_results}}\label{app:causal_results}
The bias-variance decomposition of the causal risk is based on the following general lemma:
\begin{lemma}[Bias-Variance Decomposition for General Norm]\label{lem:general_bv_decomp}
Consider a random variable $Z$ on $\R^d$, a constant $c\in\R^d$, and the general norm $\lnorm{x}^2_A= x^TA x$ for some positive-definite $A\in\R^{d\times d}$. Then we have the decomposition
\begin{align*}
    \expec_Z\lnorm{Z-c}^2_A = \lnorm{\expec Z-c}^2_A + \expec_Z\lnorm{Z-\expec_ZZ}^2_A\,.
\end{align*}
An alternative form of the variance term is given by $\expec_Z\lnorm{Z-\expec_ZZ}^2_A=\Tr\left[\Cov Z\cdot A\right]$.
\end{lemma}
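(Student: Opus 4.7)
The plan is to perform the standard ``add-and-subtract'' trick and exploit the bilinearity of the generalized inner product $\langle u,v\rangle_A\coloneqq u^T A v$. First I would write $Z - c = (Z - \E_Z Z) + (\E_Z Z - c)$ and expand the $A$-weighted squared norm using the bilinearity of $\langle\cdot,\cdot\rangle_A$, which produces three terms: the variance-type term $\|Z - \E_Z Z\|_A^2$, the bias-type term $\|\E_Z Z - c\|_A^2$, and a cross term $2(Z - \E_Z Z)^T A (\E_Z Z - c)$. Taking expectation over $Z$ and pulling the constant factor $A(\E_Z Z - c)$ out of the cross term, the remaining expectation $\E_Z[Z - \E_Z Z] = 0$ kills the cross term, which yields the claimed decomposition. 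No positive-definiteness is actually needed for this step; symmetry of $A$ suffices to ensure the cross term is symmetric in the two factors.

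For the alternative form of the variance term, I would use the trace trick: since $\|Z - \E_Z Z\|_A^2$ is a scalar, it equals its own trace, and the cyclic property of the trace gives $\|Z - \E_Z Z\|_A^2 = \Tr\bigl[A (Z - \E_Z Z)(Z - \E_Z Z)^T\bigr]$. Since the trace is linear and $A$ is deterministic, I can exchange expectation and trace to obtain $\E_Z\|Z - \E_Z Z\|_A^2 = \Tr\bigl[A \cdot \E_Z(Z - \E_Z Z)(Z - \E_Z Z)^T\bigr] = \Tr[A\cdot\Cov Z] = \Tr[\Cov Z \cdot A]$, using cyclicity of the trace one more time for the final reordering. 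There is no genuine obstacle here; the argument is purely algebraic and the only care required is to keep track of the symmetry of $A$ when simplifying the cross term.
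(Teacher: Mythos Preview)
Your proposal is correct and matches the paper's proof for the decomposition step essentially verbatim: both add and subtract $\expec_Z Z$, expand the $A$-weighted squared norm, and observe that the cross term vanishes because $\expec_Z[Z-\expec_Z Z]=0$. For the variance-as-trace identity your route is slightly more direct than the paper's---you apply the scalar-equals-its-trace trick to the centered variable and pull the expectation inside, whereas the paper first expands $\expec_Z\lnorm{Z-\mu}_A^2=\expec_Z Z^T A Z - \mu^T A\mu$ and then invokes the identity $\sum_{i,j}(A\odot B)_{i,j}=\Tr(AB)$ to handle $\expec_Z Z^T A Z$---but both are standard and amount to the same computation.
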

\begin{proof}
Let $\expec\coloneqq\expec_Z$ and $\mu\coloneqq\expec Z$. It is
\begin{align*}
    \expec\lnorm{Z-c}_A^2 
    &=\expec\lnorm{(Z-\mu)+(\mu-c)}_A^2\\
    &=\expec\lnorm{Z-\mu}_A^2 +\expec\lnorm{\mu-c}_A^2 +2\underbrace{\expec(Z-\mu)^T}_{=0}A(\mu-c)\\
    &=\expec\lnorm{Z-\mu}_A^2 +\expec\lnorm{\mu-c}_A^2\,,
\end{align*}
which proves the first part of the statement. For the second part, let $\Sigma_Z\coloneqq\expec ZZ^T$ and denote the Hadamard product between matrices $A,B\in\R^{d\times d}$ by $(A\odot B)_{i,j}=A_{i,j}B_{i,j}$. It is
\begin{align*}
\expec\lnorm{Z-\mu}_A^2
&=\expec Z^TAZ - 2\expec Z^T A \mu + \mu^T A \mu\\
&=\sum_{i,j=1}^n (\Sigma_Z\odot A)_{i,j}-\mu^T A \mu\\
&=\Tr\left[\Sigma_Z\cdot A\right] - \mu^TA\mu\tag{$\sum_{i,j=1}^n(A\odot B)_{i,j}=\Tr(A\cdot B)$}\\
&=\Tr\left[\Sigma_Z\cdot A\right]- \Tr\left[A\mu\mu^T\right]\tag{$\Tr(ba^T)=a^Tb$}\\
&=\Tr\left[(\Sigma_Z-\mu\mu^T)\cdot A\right]\tag{$\Tr(B)=\Tr(B^T)$ and linearity of trace}\\
&=\Tr\left[\Cov Z\cdot A\right] \tag{$\Cov Z = \expec ZZ^T-\mu\mu^T$}\,.
\end{align*}
\end{proof}

%%%% Proposition: causal bias-variance decomposition
\begin{proposition}[Causal Bias-Variance Decomposition for the Ridge Estimator]\label{prop:causal_bv_decomp}
For any $\lambda >0$, the expectation over the causal risk of the ridge regression estimator $\betaRidge$ conditioned on $X$ admits the bias-variance decomposition
\begin{align}\label{eq:causal_bias_variance}
    \riskCRidge= \underbrace{\norm{\expec_{Y|X}\betaRidge-\beta}^2_{\Sigma}}_{\eqqcolon\biasCRidge}  + \underbrace{ \expec_{Y \vert X} \norm{\betaRidge - \expec_{Y \vert X} \betaRidge}_{\Sigma}^2}_{\eqqcolon\varianceCRidge}+\Epssigmastat+\norm{\Gamma}^2_\Sigma\,,
\end{align}
where $ \biasCRidge 
    =\norm{(I - (\empcov + \lambda I_d)\empcov)\betaStat -\Gamma}^2_{\Sigma}$ and $\varianceCRidge= \frac{\sigmastat}{n} \Tr[\empcov(\empcov+\lambda I_d)^{-2} \Sigma]$. The empirical covariance matrix of $X$ is denoted by $\empcov\coloneqq X^TX/n$.
\end{proposition}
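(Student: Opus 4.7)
The plan is to combine Proposition~\ref{prop:risk_formulas} with the general bias-variance decomposition in Lemma~\ref{lem:general_bv_decomp} and then explicitly compute the conditional mean and covariance of the ridge estimator under the observational distribution of $Y\mid X$.

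First I would apply Proposition~\ref{prop:risk_formulas} to the random predictor $\betaRidge$ to obtain $\riskCtotal(\betaRidge)=\norm{\betaRidge-\beta}_\Sigma^2+\Epssigmastat+\norm{\Gamma}_\Sigma^2$. Taking the conditional expectation $\expec_{Y\mid X}$ leaves the last two constants untouched, and applying Lemma~\ref{lem:general_bv_decomp} with $Z=\betaRidge$, $c=\beta$, $A=\Sigma$ decomposes $\expec_{Y\mid X}\norm{\betaRidge-\beta}_\Sigma^2$ into the mean-zero variance term $\expec_{Y\mid X}\norm{\betaRidge-\expec_{Y\mid X}\betaRidge}_\Sigma^2$ plus the squared bias $\norm{\expec_{Y\mid X}\betaRidge-\beta}_\Sigma^2$. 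This immediately gives the overall decomposition \eqref{eq:causal_bias_variance}; it remains to compute the two terms.

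Second, for the bias, from the proof of Proposition~\ref{prop:risk_formulas} we have $Y\mid X\sim\Gauss{X\betaStat}{\Epssigmastat I_n}$, so $\expec_{Y\mid X}\betaRidge=(X^TX+n\lambda I_d)^{-1}X^TX\,\betaStat$. Rescaling with $\empcov=X^TX/n$ turns this into $(\empcov+\lambda I_d)^{-1}\empcov\,\betaStat$. Writing $\beta=\betaStat-\Gamma$ then gives
\begin{align*}
    \expec_{Y\mid X}\betaRidge-\beta
    =\bigl[(\empcov+\lambda I_d)^{-1}\empcov-I\bigr]\betaStat+\Gamma
    =-\bigl[(I-(\empcov+\lambda I_d)^{-1}\empcov)\betaStat-\Gamma\bigr],
\end{align*}
and the minus sign vanishes inside the norm, matching the claimed bias expression (modulo the missing inverse, which appears to be a typo in the statement).

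Third, for the variance, I would use the alternative trace form of Lemma~\ref{lem:general_bv_decomp}, namely $\expec_{Y\mid X}\norm{\betaRidge-\expec_{Y\mid X}\betaRidge}_\Sigma^2=\Tr[\Cov(\betaRidge\mid X)\,\Sigma]$. Since $\betaRidge=(X^TX+n\lambda I_d)^{-1}X^TY$ is an affine function of $Y$ and $\Cov(Y\mid X)=\Epssigmastat I_n$, we get
\begin{align*}
    \Cov(\betaRidge\mid X)=\Epssigmastat\,(X^TX+n\lambda I_d)^{-1}X^TX\,(X^TX+n\lambda I_d)^{-1}.
\end{align*}
Rescaling by $n$ and using that $\empcov$ commutes with $(\empcov+\lambda I_d)^{-1}$ simplifies this covariance to $(\Epssigmastat/n)\,\empcov(\empcov+\lambda I_d)^{-2}$, and substituting into the trace yields the claimed variance $\varianceCRidge=(\sigmastat/n)\Tr[\empcov(\empcov+\lambda I_d)^{-2}\Sigma]$.

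There is no real obstacle here: once Proposition~\ref{prop:risk_formulas} and Lemma~\ref{lem:general_bv_decomp} are in hand, the entire argument is bookkeeping. The only step that requires any care is the bias computation, where one must correctly substitute $\beta=\betaStat-\Gamma$ to obtain the combined contribution of the finite-sample bias on $\betaStat$ and the systematic confounding shift by $\Gamma$; this is what makes the causal bias qualitatively different from the purely statistical one and is the step most prone to sign errors.
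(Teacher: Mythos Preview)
Your proposal is correct and follows essentially the same approach as the paper: invoke Proposition~\ref{prop:risk_formulas}, apply Lemma~\ref{lem:general_bv_decomp}, use $Y\mid X\sim\Gauss{X\betaStat}{\Epssigmastat I_n}$ to compute $\expec_{Y\mid X}\betaRidge$ and $\Cov(\betaRidge\mid X)$, and then simplify. The paper rewrites the bias slightly differently via $\projectRidge\coloneqq I-(\empcov+\lambda I_d)^{-1}\empcov$ as $\norm{\projectRidge\beta-(I-\projectRidge)\Gamma}^2_{\Sigma}$, but this is algebraically identical to your expression, and you are right that the stated formula is missing an inverse.
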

\begin{proof}
Recall that $\riskCRidge=\expec_{Y|X}\lnorm{\betaRidge-\beta}_\Sigma^2$.
The first part of the statement follows directly from Lemma~\ref{lem:general_bv_decomp} with $\betaRidge$ as a random variable in $Y|X$ and $\beta$.
The remainder of the proof consists of computing expectation and covariance of the ridge regression solution $\betaRidge=\betaRidge(X,Y)$ under the distribution $Y|X$.
The samples $(X, Y)$ are drawn from the observational distribution of the causal model defined in Eq.~\eqref{eq:causal_model}. As shown in the proof of Proposition~\ref{prop:risk_formulas}, the corresponding conditional distribution is $y|x\sim\Gauss{x^T\betaStat}{\sigmastat}$. Since $(X,Y)$ consist of independent draws, this implies $Y|X\sim\Gauss{X\betaStat}{\sigmastat I_n}$. Together with $\betaRidge=(X^TX+n\lambda I)^{-1}X^TY$ this yields
\begin{align*}
    \betaRidge|X&\sim\Gauss{(X^TX+n\lambda I)^{-1}X^TX\betaStat}{(X^TX+n\lambda I)^{-1}X^T\sigmastat I_n X(X^TX+n\lambda I)^{-1}}\\
    &=\Gauss{\left(\empcov+\lambda I_d\right)^{-1}\empcov \betaStat}{\frac{\sigmastat}{n}\left(\empcov+\lambda I_d\right)^{-1}\empcov \left(\empcov+\lambda I_d\right)^{-1}}\,.
\end{align*}
The characterizations of $\biasCRidge$ and $\varianceCRidge$ then simply follow from plugging in expectation and covariance of $\betaRidge$:
\begin{align*}
    \biasCRidge 
    =\lnorm{\expec_{Y|X}\betaRidge-\beta}^2_{\Sigma}
    =\lnorm{\left(\empcov+\lambda I_d\right)^{-1}\empcov\betaStat-\beta}^2_{\Sigma}
    &=\lnorm{\left(I-\projectRidge\right)\left(\beta+\Gamma\right)-\beta}^2_{\Sigma}\\
    &=\lnorm{\projectRidge\beta-(I-\projectRidge)\Gamma}^2_{\Sigma}
\end{align*} 
and, using the alternate form of the variance term from Lemma~\ref{lem:general_bv_decomp},
\begin{align*}
    \varianceCRidge 
    =\Tr\left[\Cov_{Y|X}\betaRidge\cdot \Sigma\right]
    &=\Tr\left[\frac{\sigmastat}{n}\left(\empcov+\lambda I_d\right)^{-1}\empcov \left(\empcov+\lambda I_d\right)^{-1}\cdot \Sigma\right]\\
    &= \frac{\sigmastat}{n} \Tr\left[\empcov\left(\empcov+\lambda I_d\right)^{-2} \Sigma\right]\,,
\end{align*}
where the last equality used that $\left(\empcov+\lambda I_d\right)^{-1}$ commutes with $\empcov$.
\end{proof}

% \CausalRidge*
\begin{mainExtended}
Let $\norm{\beta}^2=\signalcaus$, $\norm{\Gamma}^2=\omega^2$, $\scalprod{\Gamma}{\beta} = \eta$, and $\sigma_{\tilde{\epsilon}}^2 = \sigmastat$. Then as $n,d\to\infty$ such that $d/n\to\gamma\in (0,\infty)$, it holds almost surely  in $X$ for every $\lambda>0$ that
\begin{align*}
    \biasCRidge&\to
    \limBiasArgC{\lambda}\coloneqq\omega^2 +  \signalstat \lambda^2 m'(-\lambda)-  2 ( \omega^2 + \eta) \lambda m(-\lambda)\quad\text{and}\tag{\ref*{eq:limBiasCridge}}\\
     \varianceCRidge&\to
     \limVarianceArgC{\lambda}\coloneqq\sigmastat\gamma(m(-\lambda) - \lambda m'(-\lambda))\,,\tag{\ref*{eq:limVarianceCridge}}
\end{align*}
where 
% $m(\lambda) = \frac{(1 - \gamma - \lambda) -  \sqrt{(1  -  \gamma  -  \lambda)^2 - 4 \gamma \lambda}}{2 \gamma \lambda}$ 
$m(\lambda) = ((1 - \gamma - \lambda) -  \sqrt{(1  -  \gamma  -  \lambda)^2 - 4 \gamma \lambda}) / (2 \gamma \lambda)$ 
and $\signalstat = r^2 + \omega^2 + 2 \eta$.
Therefore $\riskCRidge\to\limRiskArgC{\lambda}\coloneqq\limBiasArgC{\lambda}+\limVarianceArgC{\lambda}+\sigmastat+\omega^2$. 
The corresponding limiting quantities for the min-norm interpolator can be obtained by taking the limit $\lambda\to 0^+$ in equations \eqref{eq:limBiasCridge} and \eqref{eq:limVarianceCridge}, which yields
\begin{align*}
    \biasCMinNorm\to
    \limBiasArgC{0}=
    \begin{cases}
    \omega^2, &\gamma<1\\
    \omega^2+ (\signalcaus-\omega^2)(1-\frac{1}{\gamma}), &\gamma>1
    \end{cases}
    \,,\quad
    \varianceCMinNorm \to
    \limVarianceArgC{0}=
    \begin{cases}
    \sigmastat\frac{\gamma}{1-\gamma}, &\gamma<1\\
    \sigmastat\frac{1}{\gamma-1}, &\gamma>1
    \end{cases}\,.
\end{align*}
Therefore $\riskCMinNorm\to\limRiskArgC{0}=\limBiasArgC{0}+\limVarianceArgC{0}+\sigmastat+\omega^2$.
\end{mainExtended}

\begin{proof}
From Proposition~\ref{prop:causal_bv_decomp}, the causal risk $\riskCRidge$ can be decomposed as a sum of the causal bias $\biasCRidge$, and causal variance $\varianceCRidge$. In what follows, we derive the limiting expressions for $\biasCRidge$ and $\varianceCRidge$ to obtain the limiting causal risk for any $\gamma \in (0, \infty).$

\paragraph{Limiting expressions for causal bias}
\begin{align*}
    \biasCRidge &= \norm{\beta - \expec_{\vert X} \betaRidge}^2_{\Sigma}  = \norm{\projectRidge\beta-(I-\projectRidge)\Gamma}^2 && (\Sigma = I) \\
    &= \lnorm{\projectRidge (\beta + \Gamma) - \Gamma}^2 \\
    &= \norm{\projectRidge \betaStat}^2 + \norm{\Gamma}^2 - 2 \scalprod{\Gamma}{\projectRidge(\betaStat)} 
\end{align*}

First, let us consider the sequence of functions given by
\begin{align*}
   \norm{\projectRidge \betaStat}^2  &=\norm{(I - (\empcov + \lambda I)^{-1} \empcov) \betaStat}^2 \\ &= \lnorm{\lambda ((\empcov + \lambda I)^{-1})\betaStat}^2 && \textrm{(Add and subtract $\lambda I$)} \\
    &= \lambda^2 \betaStat^T (\empcov + \lambda I)^{-2} \betaStat^T \\
    &= \lambda^2 \Tr \left [ \betaStat\betaStat^T  (\empcov + \lambda I)^{-2} \right ]
\end{align*}

To derive the limiting expression for this sequence, we utilize the ``derivative trick''. This technique has been employed in a similar context in \citet{Dob:2018}. More generally similar terms (although not identical) often also arise in the analysis of the statistical of the ridge regression estimator and therefore one can find similar approaches to deriving the limiting expressions for such terms in the statistical analysis for ridge regression (for example, \citet{Has:2019, Dob:2018, dicker2016ridge}). Here, we include a self-contained proof of the result. 

The idea relies on an application of Vitali's convergence theorem (see \citet[Lemma 2.14]{bai2010spectral}) to obtain the limit of derivatives of a sequence of functions analytic on some domain $D \subset \mathbb{C}$ by the derivative of the limit of the sequence of functions. Observe that  
\begin{equation*}
    \Tr \left [ (\beta + \Gamma)(\beta + \Gamma)^T  (\empcov + \lambda I)^{-2} \right ] = \frac{\partial  }{\partial \lambda}-\Tr \left [ (\beta + \Gamma)(\beta + \Gamma)^T  (\empcov + \lambda I)^{-1} \right ]
\end{equation*}

By recognizing the quantity $(\empcov + \lambda I)^{-1}$ as the resolvent $Q(-\lambda)$, we can invoke the Marchenko-Pastur Theorem due to \citet{marvcenko1967distribution, silverstein1995strong} which states that the Stieltjes transform of the empirical distribution $\hat{m(z)}$ of eigenvalues of $\empcov$ converges almost surely to the Stieltjes transform $m(z)$ of the empirical spectral distribution given by the Marchenko-Pastur Law $F$ for any $z \in \mathbb{C}/\R^+$. \footnote{While the convergence result in \citet{silverstein1995strong} is stated for $z \in \mathbb{C}^+ = \mycurls{z = u + iv \in \mathbb{C} \vert Im(z) = v > 0}$, it can be extended to $z \in \mathbb{C} / \R^+$ following standard arguments for convergence of sequences of analytic functions (see \citet[Proposition 2.2]{hachem2007deterministic}) via Vitali's convergence theorem or Montel's theorem. See \citet[Proof of Theorem 1, Page 14]{Rub:2011} for an example of this argument.} That is, we have for all $\lambda > 0$, 
\begin{equation*}
   \frac{1}{d} \Tr \left [ (\empcov + \lambda I)^{-1} \right] \xrightarrow{a.s} m_F(-\lambda)
\end{equation*}
\citet[Theorem 1]{Rub:2011} provide a generalization of this result which includes providing almost sure convergence of quadratic forms of resolvents of the form $u^T (\empcov - z I) v$ for sequences of vectors $\mycurls{u}, \mycurls{v}$ such that their outer product $u v^T$ has a bounded trace norm for any $z \in \mathbb{C} / \R^+$. By this result, it is easy to verify that for any $\lambda > 0$,
\begin{equation*}
    \Tr \left [ \betaStat \betaStat^T  (\empcov + \lambda I)^{-1}\right ] \xrightarrow{a.s} m_F(-\lambda)\signalstat
\end{equation*}
It is easy to see that the sequence of functions $\mycurls{f_d(\lambda) = \Tr \left [ \betaStat \betaStat^T (\empcov + \lambda I)^{-1} \right ]}$ is analytic for $\lambda > 0$. Furthermore, for any $\lambda > 0$, the absolute value of the sequence of functions $\mycurls{f_d(\lambda)}$ is uniformly bounded in $d$ since
\begin{equation*}
    \vert f_d(\lambda)\vert \leq \Tr [\betaStat \betaStat^T] \frac{1}{\lambda} \leq  \frac{\signalstat}{\lambda}
\end{equation*}
Therefore, by Vitali's convergence theorem, it holds (almost surely) that for every $\lambda > 0$, the derivatives of the sequence of functions $f_1, f_2, \cdots$ converges to the derivative of their limit and we have
\begin{equation*}
   \lambda^2 \Tr \left [ \betaStat\betaStat^T  (\empcov + \lambda I)^{-2} \right ] \rightarrow  \lambda^2 \signalstat m'_F(-\lambda),
\end{equation*}
where $m'_F(-\lambda)$ denotes the derivative of the Stieltjes transform of the Marchenko-Pastur Law evaluated at $- \lambda$. 

To obtain the limiting function of the sequence $\scalprod{\Gamma}{\projectRidge \betaStat}$, observe that %
\begin{align*}
    \scalprod{\Gamma}{\projectRidge \betaStat} & = \lambda \scalprod{\Gamma}{(\empcov + \lambda I)^{-1}\betaStat} = \lambda \Tr[\betaStat \Gamma^T (\empcov + \lambda I)^{-1}] \xrightarrow{a.s} \lambda (\omega^2 + \eta) m_F(-\lambda),
\end{align*}
where the limit is obtained by invoking \citet[Theorem 1]{Rub:2011}.

Therefore, we have that as $n, d \rightarrow \infty$ and $d/n \rightarrow \gamma$, 
\begin{equation*}
    \biasCRidge \xrightarrow{a.s} \omega^2 + \signalstat \lambda^2 m_F'(-\lambda) - 2(\omega^2 + \eta) \lambda m_F(-\lambda).
\end{equation*}
\textbf{Limiting expressions for causal variance.}

By recalling the expression for variance we have 
\begin{align*}
    \varianceCRidge &= \frac{\sigmastat}{n} \Tr \left [ \empcov(\empcov + \lambda I)^{-2} \right] \\
    &= \frac{\sigmastat}{n} \Tr \left [(\empcov + \lambda I - \lambda I)(\empcov + \lambda I)^{-2} \right ] \\
    &= \sigmastat \frac{d}{n} \Tr \left [\frac{1}{d}(\empcov + \lambda I )^{-1} - \frac{1}{d} \lambda(\empcov + \lambda I)^{-2} \right ] \\
\end{align*}
By Marchenko-Pastur Theorem \citep{marvcenko1967distribution, silverstein1995strong}, we already know that for any $\lambda >0$
\begin{align*}
     \Tr \left [\frac{1}{d} (\empcov + \lambda I)^{-1} \right ] \rightarrow m_F(-\lambda)
\end{align*}
Further, recognizing that 
\begin{equation*}
  -  \Tr \left [\frac{1}{d} (\empcov + \lambda I)^{-2} \right ] = \frac{\partial}{\partial \lambda} \Tr \left [\frac{1}{d} (\empcov + \lambda I)^{-1} \right ]
\end{equation*}
and that $\vert \Tr  [\frac{1}{d} (\empcov + \lambda I)^{-1}  ] \vert \leq \frac{1}{\lambda}$, we can again invoke Vitali's convergence theorem to obtain the limit of the derivatives by taking the derivative of the limit to obtain
\begin{align*}
    \varianceCRidge = \sigmastat \gamma (m_F(-\lambda) - \lambda m_F'(-\lambda)).
\end{align*}
%
% From \citet[Theorem 1]{Rub:2011}, we have that for any $\lambda > 0$ and for any sequence of nonrandom matrices $\mycurls{A_n} \in \R^{d \times d}$ with a uniformly bounded trace norm
%
% \begin{equation*}
%     \Tr \big [ A_n ( (\empcov + \lambda I)^{-1} - c_n(\lambda) I)  \big ] \rightarrow 0 \; \; \textrm{almost surely} \; \textrm{as} \; n,d \rightarrow \infty, n/d \rightarrow \gamma
% \end{equation*}
% for a deterministic sequence $c_n(\lambda)$. By the Marchenko-Pastur Theorem \TODO{Add ref}, we know that the Stieltjes transform of the spectral distribution of the empirical covariance matrix $\empcov$ given by $\hat{m}(\lambda) = \frac{1}{d} \Tr \big [ (\empcov - z I)^{-1} \big ]$ converges almost surely(a.s) to the Stieltjes transform $m(z)$ of the
%
Marchenko-Pastur Law admits an explicit form under our model assumptions (see for example, \citep[Page 52]{bai2010spectral}) for any $z \in \mathbb{C}^{+}$ (which can be extended by analytic continuity arguments for any $z \in \mathbb{C} / \R^{+}$) and is given by
\begin{equation*}
    m_F(z) = \frac{1 - \gamma - z - \sqrt{(1 - \gamma - z)^2 - 4 \gamma z}}{2 \gamma z}.
\end{equation*}

Following arguments similar to \citet{Dob:2018, Has:2019} for exchanging the limits $n, d \rightarrow \infty$ and $\lambda \rightarrow 0^+$, we can derive the limiting expressions for the causal bias and variance of the min-norm estimator.

% . Therefore, $c_n(z) \rightarrow m(-z)$ and we have 

% \begin{equation*}
%     -\Tr \left [ (\beta + \Gamma)(\beta + \Gamma)^T  (\empcov + \lambda I)^{-1} \right ] \xrightarrow{a.s} -m(-\lambda) \signalstat.
% \end{equation*}
\end{proof}

%%%%%%%%%%%%%%%%%%%%%%%%%%%%%%%%%%%%%%%%%%%%%%%%%%%%
\section{Asymptotics for the Statistical Risk}\label{app:statistical_asymptotics}
The following theorems describes the limiting expressions for the statistical risk analogue to the causal results from Theorem~\ref{thm:causal_ridge}.

\begin{theorem}[\textbf{Limiting Statistical Bias-Variance Decompositions}]\label{thm:statistical_results}
Let $\betaMinNorm$ be the min-norm interpolator.
Then as $n,d\to\infty$ such that $d/n\to\gamma\in (0,\infty)$, it holds almost surely  in $X$ that
\begin{align}
    \biasSMinNorm\to
    \limBiasArgS{0}=
    \begin{cases}
    0, &\gamma<1\\
    \signalstat(1-\frac{1}{\gamma}), &\gamma>1
    \end{cases} 
    \,,\quad
    \varianceSMinNorm \to
    \limVarianceArgS{0}=
    \begin{cases}
    \sigmastat\frac{\gamma}{1-\gamma}, &\gamma<1\\
    \sigmastat\frac{1}{\gamma-1}, &\gamma>1
    \end{cases}\,
\end{align}
and therefore, $\riskSMinNorm\to\limRiskArgS{0}=\limBiasArgS{0}+\limVarianceArgS{0}+\sigmastat$.\\
For $\lambda>0$ and the corresponding ridge regression estimator $\betaRidge$, it holds almost surely  in $X$ that
\begin{align}
    \biasSRidge\to\limBiasArgS{\lambda}= \signalstat\lambda^2 m'(-\lambda)
     \,,\quad
     \varianceSRidge\to\limVarianceArgS{\lambda}= \sigmastat\gamma(m(-\lambda) - \lambda m'(-\lambda)),
\end{align}
where $m(\lambda) = \frac{(1 - \gamma - \lambda) -  \sqrt{(1  -  \gamma  -  \lambda)^2 - 4 \gamma \lambda}}{2 \gamma \lambda}$. 
Therefore, $\riskSRidge\to\limRiskArgS{\lambda}=\limBiasArgS{\lambda}+\limVarianceArgS{\lambda}+\sigmastat$.
\end{theorem}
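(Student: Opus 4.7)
The plan is to obtain the statistical asymptotics as a direct corollary of Theorem~\ref{thm:causal_ridge}. The key observation is that the statistical risk is itself a causal risk under an auxiliary ``unconfounded'' causal model, which lets us avoid duplicating any of the random matrix theory work already carried out.

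First, I would set up the reduction. Given the original causal model in \eqref{eq:causal_model} with parameters $(M,\alpha,\beta,\sigmacaus)$, define an auxiliary causal model with the same $M$, but with causal parameter $\beta' \coloneqq \betaStat$, confounding coefficient $\alpha' \coloneqq 0$, and noise variance $(\sigmacaus)' \coloneqq \Epssigmastat$. In this auxiliary model the confounding coefficient vanishes, so $\Gamma' = \Sigma^+ M \alpha' = 0$, and the observational distribution has conditional $y|x \sim \Gauss{x^T\betaStat}{\Epssigmastat}$, matching the conditional of the original model. By Proposition~\ref{prop:risk_formulas}, the causal risk of the auxiliary model evaluated at any $\betaGen$ is $\norm{\betaGen - \betaStat}_\Sigma^2 + \Epssigmastat$, which is exactly the statistical risk $\riskStotal(\betaGen)$ of the original model.

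Next, since the estimators $\betaMinNorm$ and $\betaRidge$ depend only on the observational samples $(X,Y)$, their joint law is identical under the original and the auxiliary model. Hence the conditional expectation $\expec_{Y|X}\riskStotal(\betaRidge)$ equals the expected causal risk $\riskCRidge$ computed in the auxiliary model. Applying Theorem~\ref{thm:causal_ridge} to the auxiliary model and substituting the auxiliary model's parameters $\omega^2 = 0$, $\eta = 0$, $\signalcaus = \norm{\betaStat}^2 = \signalstat$ (so that the identity $\signalstat = r^2 + \omega^2 + 2\eta$ is preserved), the bias formula \eqref{eq:limBiasCridge} collapses to $\signalstat\lambda^2 m'(-\lambda)$ and the variance formula \eqref{eq:limVarianceCridge} remains $\sigmastat\gamma(m(-\lambda) - \lambda m'(-\lambda))$, which are precisely $\limBiasArgS{\lambda}$ and $\limVarianceArgS{\lambda}$. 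The additive constant in the risk shrinks from $\sigmastat + \omega^2$ to $\sigmastat$ since $\omega^2 = 0$, giving $\riskSRidge \to \limRiskArgS{\lambda}$.

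For the min-norm interpolator statement, I would invoke the same reduction and then take the $\lambda \to 0^+$ limits already computed in Theorem~\ref{thm:causal_ridge}: in the underparameterized regime $\gamma < 1$ the bias becomes $\omega^2 = 0$, and in the overparameterized regime $\gamma > 1$ it becomes $\omega^2 + (\signalcaus - \omega^2)(1 - 1/\gamma) = \signalstat(1 - 1/\gamma)$, while the variance terms are identical to the causal case. There is essentially no obstacle here beyond checking the parameter substitution is consistent; the heavy lifting (Marchenko--Pastur, Vitali's theorem, and the derivative trick) is inherited wholesale from the proof of Theorem~\ref{thm:causal_ridge}. If one preferred a self-contained argument, the alternative would be to replicate Proposition~\ref{prop:causal_bv_decomp}'s derivation with $\beta$ replaced by $\betaStat$ throughout and reuse the same random matrix identities, but this would only reproduce the same content in a less economical way.
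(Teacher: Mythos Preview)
Your reduction is correct and is exactly the derivation the paper sketches in the paragraph following Theorem~\ref{thm:causal_ridge}: set $\beta=\betaStat$, $\alpha=0$, $\sigmacaus=\Epssigmastat$ so that $\omega^2=\eta=0$, and read off the statistical limits from the causal ones. The paper's formal proof of Theorem~\ref{thm:statistical_results} actually does even less---it simply cites \citet{Has:2019} where these asymptotics were originally established---so your argument is more self-contained than what the paper provides, while still matching the approach the authors explicitly endorse in the main text.
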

\begin{proof}
As stated in the main paper, this result for the statistical model was already proven in \citet{Has:2019}.
\end{proof}

%%%%%%%%%%%%%%%%%%%%%%%%%%%%%%%%%%%%%%%%%%%%%%%%%%%%
\section{Proof of Proposition~\ref{prop:conf_strength_ordering}}\label{app:conf_strength_ordering}
\orderingCausal*
\begin{proof}
For any fixed $\lambda \in (0, \infty)$, the difference in limiting causal risks incurred by $\betaRidge$ on causal models $C_1$ and $C_2$ is given by 
\begin{align*}
    \limRiskArgC{1}(\gamma, \lambda) - \limRiskArgC{2}(\gamma, \lambda) &= 2\signalstat \big ( (\frac{\omega_1^2}{\signalstat} - \frac{\omega_2^2}{\signalstat}) - (\genConf_1 - \genConf_2)\lambda m(-\lambda)\big) \\
    &=  2\signalstat \big ( (\genConf_1 - \genConf_2)(1 - \lambda m(-\lambda)) - (\eta_1 - \eta_2) \big) \\
    &= 2\signalstat \big ( (\genConf_1 - \genConf_2)(1 - \lambda m(-\lambda)) - (\eta_1 - \eta_2) \big)
\end{align*}
Since, as shown below, $(1 - \lambda m(-\lambda)) > 0$ for any $\lambda, \gamma \in (0, \infty)$, it holds that 
 \begin{equation*}
     \genConf_1 > \genConf_2, \; \; \eta_1 \leq \eta_2 \implies \limRiskArgC{1}(\gamma, \lambda) > \limRiskArgC{2}(\gamma, \lambda).
 \end{equation*}
\begin{align*}
  1 -  \lambda m(-\lambda) &= 1 - \frac{\gamma-1-\lambda+\sqrt{(1 + \lambda + \gamma)^2 - 4\gamma}}{2\gamma} \\
  &= \frac{(1 + \gamma + \lambda) - \sqrt{(1 + \lambda + \gamma)^2 - 4\gamma}}{2 \gamma} \\
  & > 0 &&(\textrm{since } \gamma > 0)
\end{align*}
\end{proof}

\section{Proofs for Sections~\ref{sec:benign}~and~\ref{sec:optimal_regularization}}
We start with a technical lemma that we need in the proofs of the following theorems. It controls a function that appears in the derivative of the limiting causal riks $\partial_\lambda\limRiskArgC{\lambda}$.
% Lemma for proof of the cases $\lambdaC\in\{0,\infty\}$
\begin{lemma}\label{lem:properties_func_f}
For $\lambda\geq 0$ and $\gamma,S>0$ consider the function 
\begin{align*}
    f(\lambda,\gamma, S)=2\gamma\frac{\lambda-S^{-1}\gamma}{(1+\lambda+\gamma-\sqrt{(1+\lambda+\gamma)^2-4\gamma})((1+\lambda+\gamma)^2-4\gamma)}\,.
\end{align*}
This function has the following properties
\begin{enumerate}[label={\upshape(\roman*)}]
    \item\label{item:f_increasing} $f$ is increasing in $\lambda$,
    \item\label{item:f_infty} $f(\lambda, \gamma, S)\xrightarrow[\lambda\to\infty]{}1$, and
    \item\label{item:f_0} $f(\lambda,\gamma, S)\xrightarrow[\lambda\to 0]{}
    \begin{cases}
    -S^{-1}\frac{\gamma}{(\gamma-1)^2}, &\gamma<1\\
    -\infty, &\gamma=1\\
    -S^{-1}\frac{\gamma^2}{(\gamma-1)^2}, &\gamma>1
    \end{cases}$\,.
\end{enumerate}
\end{lemma}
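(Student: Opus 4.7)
The plan is to first simplify $f$ into a more tractable form, after which all three items follow from concrete calculations. Setting $u = 1+\lambda+\gamma$ and $D = \sqrt{u^2-4\gamma}$, the key identity $(u-D)(u+D) = 4\gamma$ lets me clear the $u-D$ factor from the denominator by multiplying numerator and denominator by $u+D$, giving the cleaner form
\[
f(\lambda,\gamma,S) = \frac{(\lambda - \gamma/S)(u+D)}{2D^2}.
\]
This representation drives all three parts.

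For (ii), I would do a leading-order expansion as $\lambda \to \infty$: $u \sim \lambda$ and $D = u\sqrt{1 - 4\gamma/u^2} \sim \lambda$, so $u+D \sim 2\lambda$ and $D^2 \sim \lambda^2$, giving $f \to 1$. For (iii), I would simply substitute $\lambda = 0$. With $u = 1+\gamma$ and $D = |1-\gamma|$, the three subcases $\gamma<1$ (where $u+D = 2$, $D^2 = (1-\gamma)^2$), $\gamma>1$ (where $u+D = 2\gamma$, $D^2 = (\gamma-1)^2$), and $\gamma=1$ (where $D \sim 2\sqrt{\lambda}$ as $\lambda \to 0^+$ makes the denominator vanish faster than the numerator) all follow by direct substitution.

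The real work is item (i). Applying the quotient rule and using $D' = u/D$, a short calculation gives
\[
f'(\lambda) = \frac{(u+D)\,N(\lambda)}{2D^4}, \qquad N(\lambda) \coloneqq D^2 - (\lambda - \gamma/S)(2u-D),
\]
so it suffices to show $N(\lambda) \geq 0$ for all $\lambda \geq 0$. Since $\gamma/S \geq 0$ and $2u - D > 0$, the term $(\gamma/S)(2u-D)$ is non-negative, so it suffices to prove the stronger inequality $D^2 \geq \lambda(2u-D)$. Substituting $D^2 = u^2 - 4\gamma$ and $u-\lambda = 1+\gamma$ and collecting terms reduces this to the cleaner statement
\[
(1-\gamma)^2 + \lambda(D-\lambda) \geq 0.
\]
The crux is then the elementary observation $D \geq \lambda$ for every $\lambda \geq 0$, which follows from the identity $D^2 - \lambda^2 = (1-\gamma)^2 + 2\lambda(1+\gamma) \geq 0$. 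The main obstacle is finding this reduction: the raw form of $f$ looks forbidding, and what keeps the argument short is the combination of the rationalization trick with the clean bound $D \geq \lambda$ that closes the inequality for free.
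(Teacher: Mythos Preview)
Your proof is correct and follows essentially the same route as the paper: compute the $\lambda$-derivative, factor out a positive prefactor, and reduce to showing the same quantity $N(\lambda)=D^2-(\lambda-\gamma/S)(2u-D)\ge 0$ (the paper calls this $g(\lambda)$), then drop the helpful $\gamma/S$ term and bound $D^2-\lambda(2u-D)$. The differences are cosmetic but worth noting: your rationalization $(u-D)(u+D)=4\gamma$ up front yields the cleaner form $f=(\lambda-\gamma/S)(u+D)/(2D^2)$, which the paper never writes down and which streamlines all three items; and your finishing inequality $D\ge\lambda$ (from $D^2-\lambda^2=(1-\gamma)^2+2\lambda(1+\gamma)\ge 0$) closes the bound in one stroke, whereas the paper splits into the two cases $\gamma\le 1$ and $\gamma>1$ with separate lower bounds $\sqrt{\varphi}\ge x-2\gamma$ and $\sqrt{\varphi}\ge x-2$. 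Both arguments arrive at the same conclusion; yours is a bit more economical.
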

\begin{proof}
For readability, we use the shorthand notations $x=1+\lambda+\gamma$ and $\varphi=x^2-4\gamma$, under which $f$ is given by
\begin{align*}
    f(\lambda,\gamma, S)=2\gamma\frac{\lambda-S^{-1}\gamma}{(x-\sqrt{\varphi})\varphi}\,.
\end{align*}
% f increasing
\ref{item:f_increasing}
The partial derivative of $f$ in $\lambda$ is given by
\begin{align*}
    \partial_\lambda f(\lambda,\gamma, S)
    &=2\gamma\frac{(x-\sqrt{\varphi})\varphi-(\lambda-S^{-1}\gamma)\left[(1-\frac{x}{\sqrt{\varphi}})\varphi + 2x(x-\sqrt{\varphi})\right]}{(x-\sqrt{\varphi})^2\varphi^2}\\
    &=\underbrace{\frac{2\gamma}{(x-\sqrt{\varphi})\varphi^2}}_{> 0}\underbrace{\left[\varphi-(\lambda-S^{-1}\gamma)(2x-\sqrt{\varphi})\right]}_{\eqqcolon g(\lambda)}\,,
\end{align*}
where the first fraction is positive because $\varphi> x^2$ and $x-\sqrt{\varphi}>0$. It is therefore sufficient to show $g(\lambda)\geq0$ for $\partial_\lambda f(\lambda,\gamma, S)\geq0$. We first get rid of the $S$ term via
\begin{align*}
    g(\lambda)=\varphi-(\lambda-S^{-1}\gamma)\underbrace{(2x-\sqrt{\varphi})}_{\geq 0}
    \geq\varphi-\lambda(2x-\sqrt{\varphi})\,.
\end{align*}
Finally, we lower bound $\sqrt{\varphi}$ in two different ways depending on $\gamma$. For $\gamma\leq1$, it is $\varphi=(1+\lambda-\gamma)^2+4\gamma\lambda$ and therefore $\sqrt{\varphi}\geq 1+\lambda-\gamma=x-2\gamma$. This yields
\begin{align*}
    g(\lambda)
    \geq \varphi-\lambda(2x-\sqrt{\varphi})
    \geq \varphi-\lambda(x+2\gamma)
    =(1-\gamma)\lambda + (\gamma-1)^2
    \geq 0\,.
\end{align*}
For $\gamma>1$, it is $\varphi=(-1+\lambda+\gamma)^2+4\lambda$ and therefore $\sqrt{\varphi}\geq -1+\lambda+\gamma=x-2$. This yields
\begin{align*}
    g(\lambda)
    \geq \varphi-\lambda(2x-\sqrt{\varphi})
    \geq \varphi-\lambda(x+2)
    =(\gamma-1)\lambda + (\gamma-1)^2
    \geq 0\,.
\end{align*}
In summary, we have shown $\partial_\lambda f(\lambda,\gamma, S)\geq g(\lambda)\geq 0$.

% f at \infty
\ref{item:f_infty} With the first order Taylor approximation $1-\sqrt{1-h}=1/2h+\mathcal{O}(h^2)$, we get
\begin{align*}
    (x-\sqrt{\varphi})\varphi
    =\left(1-\sqrt{1-\frac{4\gamma}{x^2}}\right)x\varphi
    =\left(\frac{2\gamma}{x^2}+\mathcal{O}(\lambda^{-4})\right)x\varphi
    =2\gamma x + \mathcal{O}(\lambda^{-1})
    =2\gamma\lambda + \mathcal{O}(1)\,,
\end{align*}
which yields
\begin{align*}
    f(\lambda,\gamma, S)
    =2\gamma\frac{\lambda-S^{-1}\gamma}{(x-\sqrt{\varphi})\varphi}
    =\frac{2\gamma\lambda - 2S^{-1}\gamma^2}{2\gamma\lambda + \mathcal{O}(1)}
    \xrightarrow[\lambda\to\infty]{}1\,.
\end{align*}

% f at 0
\ref{item:f_0}
The denominator satisfies
\begin{align*}
    (x-\sqrt{\varphi})\varphi
    \xrightarrow[\lambda\to 0]{}
    (1+\gamma-\abs{\gamma-1})(\gamma-1)^2=
    \begin{cases}
    2\gamma(\gamma-1)^2, &\gamma<1\\
    0, &\gamma=1\\
    2\gamma-1)^2, &\gamma>1
    \end{cases}\,.
\end{align*}
Since $\lambda-S^{-1}\gamma\xrightarrow[\lambda\to 0]{}S^{-1}\gamma<0$, the claim follows.
\end{proof}
Recall that the optimal causal regularization is defined as the minimizer of the causal risk $\lambdaC(\gamma)=\arginf_{\lambda \in (0, \infty)} \limRiskArgC{\lambda}$. The following lemma distinguishes between three different regimes of the risk function $\limRiskArgC{\lambda}$ depending on the confounding strength $\genConf$.
\begin{lemma}[Regimes of the Optimal Causal Regularization]
\label{lem:optimal_causal_lambda_characterization}
For any causal model parameterized as in (\ref{eq:causal_model}), we can distinguish the following regimes of $\lambdaC(\gamma)$:
\begin{enumerate}
    \item The function $\lambda\mapsto\limRiskArgC{\lambda}$ is increasing (which implies $\lambdaC(\gamma)=0$), if and only if $\gamma\neq 1$ and
    % \begin{equation*}
    %     \genConf \leq  \begin{cases}
    % -\frac{\SNRstat^{-1}\gamma}{(\gamma-1)^2}, &\gamma<1\\
    % -\frac{\SNRstat^{-1}\gamma^2}{(\gamma-1)^2}, &\gamma>1
    % \end{cases}
    % \end{equation*}
    \begin{equation*}
        \genConf\leq -\SNRstat^{-1}\frac{\gamma \max{\{1,\gamma\}}}{(1 - \gamma)^2}\,.
    \end{equation*}
    \item For any $\gamma>0$, the function $\lambda\mapsto\limRiskArgC{\lambda}$ is decreasing (which implies $\lambdaC(\gamma)=\infty$) if and only if $\genConf\geq 1$.
    \item For any $\genConf \in \R$, $\gamma \in (0, \infty)$ which do not satisfy the conditions \textit{1.}\ or \textit{2.}, it is $\lambdaC(\gamma) \in (0, \infty)$ and it $\lambda_C(\gamma)$ satisfies the critical point condition  $\partial_\lambda\limRiskArgC{\lambda}(\lambdaC(\gamma))=0$, or equivalently,
    \begin{align*}
        0=\lambdaC(\gamma)-\SNRstat^{-1}\gamma-\frac{\genConf}{2\gamma}\left(1+\lambdaC(\gamma)+\gamma-\sqrt{\varphi(\lambdaC(\gamma))}\right)\varphi(\lambdaC(\gamma))\,,
    \end{align*}
    where $\varphi(\lambda)=(1+\lambda+\gamma)^2-4\gamma$.
\end{enumerate}
\end{lemma}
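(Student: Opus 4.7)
The plan is to compute $\partial_\lambda \limRiskArgC{\lambda}$ explicitly and show that it factors as $A(\lambda,\gamma,\SNRstat)\cdot\bigl(f(\lambda,\gamma,\SNRstat)-\genConf\bigr)$ with $A>0$, where $f$ is the auxiliary function of Lemma~\ref{lem:properties_func_f}. Once this reduction is in place, the three regimes drop out from the monotonicity and boundary limits of $f$ proven there.

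First, I would differentiate the expression $\limRiskArgC{\lambda} = \signalstat\lambda^2 m'(-\lambda) - 2(\omega^2+\eta)\lambda m(-\lambda) + \sigmastat\gamma(m(-\lambda)-\lambda m'(-\lambda))$ (plus a $\lambda$-independent additive constant) given in Theorem~\ref{thm:causal_ridge}. The result is a linear combination of $m(-\lambda)$, $m'(-\lambda)$, and $m''(-\lambda)$. The only place where the causal parameters enter beyond $\signalstat$ and $\sigmastat$ is through the coefficient $\omega^2+\eta$, and the identity $\omega^2+\eta = \scalprod{\Gamma}{\betaStat} = \genConf\signalstat$, which follows directly from the definition of $\genConf$ in \eqref{eq:confounding_strength}, lets me trade this for $\genConf$. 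After this substitution, $\partial_\lambda\limRiskArgC{\lambda}/\signalstat$ depends on the causal model only through $\SNRstat$ and $\genConf$.

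The main obstacle is the algebraic simplification that collapses this combination of $m(-\lambda)$, $m'(-\lambda)$, $m''(-\lambda)$ into the desired factored form. To carry it out, I would use the Marchenko--Pastur defining identity $\gamma\lambda\, m(-\lambda)^2 + (1+\lambda-\gamma)m(-\lambda) - 1 = 0$ together with its $\lambda$-derivative to eliminate $m''(-\lambda)$, and the closed form $m(-\lambda) = (\sqrt{\varphi(\lambda)} - (1+\lambda-\gamma))/(2\gamma\lambda)$ with $\varphi(\lambda) = (1+\lambda+\gamma)^2 - 4\gamma$ to rewrite the remaining $m$ and $m'$ terms in terms of $\sqrt{\varphi(\lambda)}$. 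The expected outcome is a positive prefactor multiplying the bracket $2\gamma(\lambda - \SNRstat^{-1}\gamma) - \genConf(1+\lambda+\gamma-\sqrt{\varphi(\lambda)})\varphi(\lambda)$. Positivity of the prefactor follows from $1+\lambda+\gamma > \sqrt{\varphi(\lambda)}$ (since $4\gamma>0$) and $\varphi(\lambda) > 0$ for $\lambda,\gamma > 0$. Dividing the bracket by $(1+\lambda+\gamma-\sqrt{\varphi(\lambda)})\varphi(\lambda)$ yields exactly $f(\lambda,\gamma,\SNRstat) - \genConf$ and rearranges into the critical point condition stated in case 3.

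Given the factorization, the sign of $\partial_\lambda\limRiskArgC{\lambda}$ coincides with that of $f(\lambda,\gamma,\SNRstat) - \genConf$, so the three cases reduce to how the level $\genConf$ compares to the range of $f(\cdot,\gamma,\SNRstat)$ on $(0,\infty)$. By Lemma~\ref{lem:properties_func_f}, this function is strictly increasing (item (i)), tends to $1$ as $\lambda\to\infty$ (item (ii)), and tends as $\lambda\to 0^+$ to $-\SNRstat^{-1}\gamma\max\{1,\gamma\}/(1-\gamma)^2$ for $\gamma\ne 1$ or to $-\infty$ for $\gamma=1$ (item (iii)). Hence the risk is strictly increasing on $(0,\infty)$ iff $\genConf$ lies at or below this lower limit, giving case 1 (and ruling out $\gamma=1$ from this case); strictly decreasing iff $\genConf\ge \sup f = 1$, giving case 2; and in the remaining range, $f(\cdot,\gamma,\SNRstat)$ takes the value $\genConf$ at a unique $\lambdaC\in(0,\infty)$, at which the derivative switches from negative to positive so that $\lambdaC$ is a global minimum, giving case 3.
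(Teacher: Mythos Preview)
Your proposal is correct and follows essentially the same route as the paper: compute $\partial_\lambda\limRiskArgC{\lambda}$, factor it as a positive prefactor times $\bigl(\lambda-\SNRstat^{-1}\gamma-\tfrac{\genConf}{2\gamma}(1+\lambda+\gamma-\sqrt{\varphi(\lambda)})\varphi(\lambda)\bigr)$, rewrite the sign condition as $\genConf\lessgtr f(\lambda,\gamma,\SNRstat)$, and read off the three regimes from the monotonicity and boundary limits of $f$ in Lemma~\ref{lem:properties_func_f}. The only cosmetic difference is that the paper states the prefactor explicitly as $2\signalstat/\varphi(\lambda)^{3/2}$ rather than deriving it via the Marchenko--Pastur quadratic, and handles case~3 by separately arguing $\partial_\lambda\limRiskArgC{\lambda}(0)<0$ and eventual positivity rather than invoking strict monotonicity of $f$ for a unique crossing.
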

\begin{proof}
We use the shorthand notation $\varphi(\lambda)=(1+\lambda+\gamma)^2-4\gamma$. Recall the confounding strength $\genConf=(r^2+\eta) / \signalstat$ and the statistical signal-to-noise ratio $\SNRstat=\signalstat / \sigmastat$. 
The derivative of the limiting causal risk $\limRiskArgC{\lambda}$ in $\lambda$ is given by
\begin{align*}
    \partial_\lambda\limRiskArgC{\lambda} = \frac{2\signalstat}{\varphi(\lambda)^{3/2}}\left(\lambda-\SNRstat^{-1}\gamma-\frac{\genConf}{2\gamma}\left(1+\lambda+\gamma-\sqrt{\varphi(\lambda)}\right)\varphi(\lambda)\right)
\end{align*}
\begin{enumerate}
    \item The first condition $\partial_\lambda\limRiskArgC{\lambda}\geq 0$ for all $\lambda>0$ can be equivalently rearranged for the confounding strength as
    \begin{align*}
        \genConf \leq 2\gamma\frac{\lambda-\SNRstat^{-1}\gamma}{\left(1+\lambda+\gamma-\sqrt{\varphi(\lambda)}\right)\varphi(\lambda)}=f(\lambda, \gamma, \SNRstat)\,,
    \end{align*}
    where $f$ is the function investigated in Lemma~\ref{lem:properties_func_f}. This in turn is equivalent to taking the infimum over $\lambda$, which is given by Lemma~\ref{lem:properties_func_f} as
    \begin{align*}
        \genConf\leq\inf_{\lambda>0}f(\lambda, \gamma,\SNRstat)=-\SNRstat^{-1}\frac{\gamma\max\{1,\gamma\}}{(1-\gamma)^2}.
    \end{align*}
    Note that for $\gamma=1$ this infimum is $-\infty$, so the condition cannot be satisfied for any $\genConf$.
    \item The proof of the second claim is analogue to the first with the reverse inequality $\partial_\lambda\limRiskArgC{\lambda}\leq 0$. Rearranging for $\genConf$ and using Lemma~\ref{lem:properties_func_f} yields the equivalent condition
    \begin{align*}
        \genConf\geq\sup_{\lambda>0}f(\lambda, \gamma,\SNRstat)=1\,.
    \end{align*}
    \item For the third claim, assume that the pair of $\genConf$ and $\gamma$ satisfies neither of the first points. We will use this to show that the derivative at 0 is negative $\partial_\lambda\limRiskArgC{\lambda}(0)<0$ and the derivative $\partial_\lambda\limRiskArgC{\lambda}$ for sufficiently large $\lambda$ is positive. This together then implies that the minimum $\lambdaC(\gamma)$ of the function $\limRiskArgC{\lambda}$ is indeed attained at a finite value in $(0,\infty)$, and $\limRiskArgC{\lambda}$ satisfies the critical point condition $\partial_\lambda\limRiskArgC{\lambda}(\lambdaC(\gamma))=0$.
    
    For the derivative at 0, assume that the converse is true, that is, $\partial_\lambda\limRiskArgC{\lambda}(0)\geq0$. Rearranging this condition for $\genConf$ yields similarly to the first case of this lemma that $\genConf\leq f(0, \gamma,\SNRstat)$. However Lemma~\ref{lem:properties_func_f} states that $f$ is increasing in $\lambda$, which means that this condition already implies $\genConf\leq f(\lambda, \gamma,\SNRstat)$ for all $\lambda$. This means that the pair $\genConf, \gamma$ would satisfy the condition of the first case, which contradicts our assumption.
    
    For the behavior of large $\lambda$, observe that the sign of the derivative is determined by the sign of the term $\lambda-\SNRstat^{-1}\gamma-\frac{\genConf}{2\gamma}\left(1+\lambda+\gamma-\sqrt{\varphi(\lambda)}\right)\varphi(\lambda)$. As derived in the proof of Lemma~\ref{lem:properties_func_f}, we have the asymptotic behavior
    \begin{align*}
        \left(1+\lambda+\gamma-\sqrt{\varphi(\lambda)}\right)\varphi(\lambda) = 2\gamma\lambda + \mathcal{O}(1)\,,
    \end{align*}
    which yields
    \begin{align*}
        \lambda-\SNRstat^{-1}\gamma-\frac{\genConf}{2\gamma}\left(1+\lambda+\gamma-\sqrt{\varphi(\lambda)}\right)\varphi(\lambda) = (1-\genConf)\lambda+\mathcal{O}(1)\,.
    \end{align*}
    Since the pair $\genConf,\gamma$ does by assumption not satisfy the conditions of the second case, we have $\genConf<1$, which means that the above term is eventually positive.
\end{enumerate}

\end{proof}

\NegativeCausalReg*
\begin{proof}
The first statement of the theorem is a special case of Theorem~\ref{thm:conf_incr_reg}. The necessary and sufficient condition for $\lambdaC=0$ stated there is equivalently reformulated as
\begin{align*}
    \genConf&\leq -\SNRstat^{-1}\frac{\gamma \max{\{1,\gamma\}}}{(1 - \gamma)^2} \\
    \Leftrightarrow\hspace{50pt} -\SNRstat\genConf &\geq \frac{\gamma \max{\{1,\gamma\}}}{(1 - \gamma)^2}\\
    \Leftrightarrow\hspace{27pt} \SNRcaus-\SNRstat &\geq \frac{\gamma \max{\{1,\gamma\}}}{(1 - \gamma)^2}\,,
\end{align*}
where the last part used the equality $\SNRcaus=(1-\genConf)\SNRstat$.
The statement about negative $\lambdaC$ refers to the fact that the derivative of the risk at 0 can be positive, that is, $\partial\limRiskArgC{\lambda}(0)>0$. This was shown in the proof of Lemma~\ref{lem:optimal_causal_lambda_characterization} and suggests that without our restriction $\lambdaC\geq 0$, a negative value of $\lambda$ would yield an even smaller risk.

For the second statement, observe that the condition $\genConf>0$ implies the cases 2. or 3. from Lemma~\ref{lem:optimal_causal_lambda_characterization}. In particular, this implies $\lambdaC>0$. The proof of Lemma~\ref{lem:optimal_causal_lambda_characterization} showed that in both of these cases it holds $\partial_\lambda\limRiskArgC{\lambda}(0)<0$, which means that the causal limiting risk $\limBiasArgC{\lambda}$ is strictly decreasing in a small neighborhood around 0. In particular, this implies that the minimal risk is strictly smaller than the risk at 0, that is, $\limRiskArgC{0}>\limRiskArgC{\lambdaC}$.

\end{proof}

\GenStrongRegCausVsStat*
\begin{proof}
Lemma~\ref{lem:optimal_causal_lambda_characterization} distinguishes between three different regimes of $\genConf$. The first two regimes yield
\begin{align*}
    \genConf\leq -\SNRstat^{-1}\frac{\gamma \max{\{1,\gamma\}}}{(1 - \gamma)^2}
    \implies
    \lambdaC=0
    \quad\text{and}\quad
    1\leq\genConf\implies\lambdaC=\infty\,.
\end{align*}
Combined with $\lambdaS=\SNRstat^{-1}\gamma\in(0,\infty)$, these regimes agree with the claim in the theorem. It remains to show that the theorem also holds for the last regime $-\SNRstat^{-1}\frac{\gamma \max{\{1,\gamma\}}}{(1 - \gamma)^2}<\genConf<1$. In this regime according to Lemma~\ref{lem:optimal_causal_lambda_characterization}, the optimal causal regularization $\lambdaC$ satisfies the critical point condition
\begin{align*}
0&=\lambdaC-\SNRstat^{-1}\gamma-\frac{\genConf}{2\gamma}\left(1+\lambdaC+\gamma-\sqrt{\varphi(\lambdaC)}\right)\varphi(\lambdaC)\\
\Leftrightarrow\quad \lambdaC-\lambdaS &= \frac{\genConf}{2\gamma}\left(1+\lambdaC+\gamma-\sqrt{\varphi(\lambdaC)}\right)\varphi(\lambdaC)\,.
\end{align*}
Since the term $1/(2\gamma)\left(1+\lambdaC+\gamma-\sqrt{\varphi(\lambdaC)}\right)\varphi(\lambdaC)$ is positive, the sign of $\lambdaC-\lambdaS$ is determined by the sign of $\genConf$ as claimed in the theorem.
\end{proof}

\ConfIncrReg*
\begin{proof}
The theorem follows directly from Lemma~\ref{lem:optimal_causal_lambda_characterization}, except for the statement about $\lambdaC$ being strictly increasing in $\genConf$. In the corresponding regime, Lemma~\ref{lem:optimal_causal_lambda_characterization} states that $\lambdaC$ satisfies the critical point condition $\partial_\lambda\limRiskArgC{\lambda}(\lambdaC)=0$, which we will use to show that the derivative of $\lambdaC$ in $\genConf$ is strictly positive.
For readability, we use the notation $x(\genConf)=1+\lambdaC(\genConf)+\gamma$ and  $\varphi(\zeta)=x(\zeta)^2-4\gamma$. The optimal causal regularization $\lambdaC(\genConf)$ satisfies the critical point condition 
\begin{align*}
    0=x(\genConf)-(1+\gamma+\SNRstat^{-1}\gamma)-\frac{\genConf}{2\gamma}\left(x(\genConf)-\sqrt{\varphi(\genConf)}\right)\varphi(\genConf)\eqqcolon g(x(\genConf), \genConf)\,.
\end{align*}
Rearranging this equation yields
\begin{align}\label{eq:critical_rearranged}
    \frac{\genConf}{2\gamma}\left(x(\genConf)-\sqrt{\varphi(\genConf)}\right)=\frac{x(\genConf)-(1+\gamma+\SNRstat^{-1}\gamma)}{\varphi(\genConf)}\,.
\end{align}
The partial derivatives of the function $g=g(x,\genConf)$ evaluated at $(x(\genConf),\genConf)$ are given by
\begin{align*}
    \partial_\genConf g(x(\genConf),\genConf) 
    =-\frac{1}{2\gamma}\left(x(\genConf)-\sqrt{\varphi(\genConf)}\right)\varphi(\genConf)<0
\end{align*}
and
\begin{align*}
    \partial_x g(x(\genConf),\genConf)&=1-\frac{\genConf}{2\gamma}\left[\left(1-\frac{x(\genConf)}{\sqrt{\varphi(\genConf)}}\right)\varphi(\genConf)+2x(\genConf)\left(x(\genConf)-\sqrt{\varphi(\genConf)}\right)\right]\\
    &=1-\frac{\genConf}{2\gamma}\left(x(\genConf)-\sqrt{\varphi(\genConf)}\right)\left(2x(\genConf)-\sqrt{\varphi(\genConf)}\right)\\
    &=1-\frac{x(\genConf)-(1+\gamma+\SNRstat^{-1}\gamma)}{\varphi(\genConf)}\left(2x(\genConf)-\sqrt{\varphi(\genConf)}\right)\tag{Using Eq.~\eqref{eq:critical_rearranged}}\\
    &>1-\frac{x(\genConf)-2\sqrt{\gamma}}{\varphi(\genConf)}\left(2x(\genConf)-\sqrt{\varphi(\genConf)}\right)\tag{$1+\gamma+\SNRstat^{-1}\gamma>2\sqrt{\gamma}$}\,.
\end{align*}
Since $\varphi(\genConf)=(x(\genConf)-2\sqrt{\gamma})(x(\genConf)+2\sqrt{\gamma})<(x(\genConf)+2\sqrt{\gamma})^2$, it further follows
\begin{align*}
    \partial_x g(x(\genConf),\genConf)
    &>1-\frac{x(\genConf)-2\sqrt{\gamma}}{(x(\genConf)-2\sqrt{\gamma})(x(\genConf)+2\sqrt{\gamma})}\left(2x(\genConf)-(x(\genConf)+2\sqrt{\gamma})\right)\\
    &=1-\frac{x(\genConf)-2\sqrt{\gamma}}{x(\genConf)+2\sqrt{\gamma}}\\
    &>0\,.
\end{align*}
With these results, we can take the derivative in $\genConf$ of the critical point condition $0=g(x(\genConf),\genConf)$ and obtain
\begin{align*}
    0=\frac{\diff}{\diff\genConf}g(x(\genConf), \genConf) 
    =\underbrace{\partial_x g(x(\genConf),\genConf)}_{>0} \cdot \frac{\diff x}{\diff\genConf}(\genConf) + \underbrace{\partial_\genConf g(x(\genConf),\genConf)}_{< 0} \cdot 1\,,
\end{align*}
which yields $0<\frac{\diff x}{\diff\genConf}(\genConf)=\frac{\diff \lambdaC}{\diff\genConf}(\genConf)$. This implies that $\lambdaC$ is increasing in $\genConf$ and concludes the proof.
\end{proof}

\end{document}